\documentclass{article}

\usepackage{microtype}
\usepackage{graphicx}
\usepackage{subcaption}
\usepackage{booktabs} 
\usepackage{bm}
\usepackage{hyperref}
\usepackage{colortbl}

\usepackage[accepted]{icml2026}

\usepackage[normalem]{ulem}
\usepackage{amsmath}
\usepackage{amssymb}
\usepackage{mathtools}
\usepackage{amsthm}
\usepackage{algorithm}
\usepackage{algorithmic}
\usepackage{amsfonts}

\usepackage{xcolor}
\usepackage{colortbl}
\definecolor{lightpurple}{HTML}{EEEAF6}
\definecolor{lightpurple2}{HTML}{E6E6FA}

\usepackage[capitalize,noabbrev]{cleveref}

\theoremstyle{plain}
\newtheorem{theorem}{Theorem}[section]
\newtheorem{proposition}[theorem]{Proposition}

\theoremstyle{definition}

\theoremstyle{remark}

\usepackage{pifont}

\icmltitlerunning{From Noise to Intent: Anchoring Generative VLA Policies with Residual Bridges}

\begin{document}

\twocolumn[
  \icmltitle{From Noise to Intent: Anchoring Generative VLA Policies with Residual Bridges}

  \icmlsetsymbol{equal}{*}
  \icmlsetsymbol{corr}{\ensuremath{\dagger}}

  \begin{icmlauthorlist}
    \icmlauthor{Yiming Zhong}{equal,st}
    \icmlauthor{Yaoyu He}{equal,st}
    \icmlauthor{Zemin Yang}{equal,st}
    \icmlauthor{Pengfei Tian}{st}
    \icmlauthor{Yifan Huang}{st}
    \icmlauthor{Qingqiu Huang}{mr}
    \icmlauthor{Xinge Zhu}{cuhk}
    \icmlauthor{Yuexin Ma}{corr,st}
  \end{icmlauthorlist}

  \icmlaffiliation{st}{ShanghaiTech University, Shanghai, China}
  \icmlaffiliation{mr}{Morphic Robotics, Shenzhen, China}
  \icmlaffiliation{cuhk}{The Chinese University of Hong Kong, Hong Kong, China}

  \icmlcorrespondingauthor{Yiming Zhong}{zhongym2024@shanghaitech.edu.cn}
  \icmlcorrespondingauthor{Yuexin Ma}{mayuexin@shanghaitech.edu.cn}

  \icmlkeywords{Vision-Language-Action, Robot Learning, Diffusion Bridge, Embodied AI}
  \vskip 0.3in
]

\printAffiliationsAndNotice{
\icmlEqualContribution\\
\textbf{Project Page:} \url{https://res-vla.github.io/ResVLA/}\quad
\textbf{Code:} \url{https://github.com/4DVLab/ResVLA}
}
\newcommand{\resvla}{ResVLA}
\newcommand{\bx}{\mathbf{x}}
\newcommand{\bc}{\mathbf{c}}
\newcommand{\bz}{\mathbf{z}}
\newcommand{\bu}{\mathbf{u}}
\newcommand{\bv}{\mathbf{v}}
\newcommand{\bmu}{\bm{\mu}}
\newcommand{\bepsilon}{\bm{\epsilon}}
\newcommand{\E}{\mathbb{E}}
\newcommand{\R}{\mathbb{R}}
\newcommand{\calX}{\mathcal{X}}
\newcommand{\calS}{\mathcal{S}}
\newcommand{\calE}{\mathcal{E}}
\newcommand{\calF}{\mathcal{F}}
\newcommand{\cmark}{\ding{51}}
\newcommand{\xmark}{\ding{55}}
\newcommand{\DCT}{\mathcal{F}_{\text{dct}}}
\newcommand{\IDCT}{\mathcal{F}^{-1}_{\text{idct}}}
\definecolor{rowblue}{RGB}{235, 242, 250} 
\definecolor{impgreen}{RGB}{0, 160, 60}   
\definecolor{forestgreen}{RGB}{112, 48, 160}

\begin{abstract}
Bridging high-level semantic understanding with low-level physical control remains a persistent challenge in embodied intelligence, stemming from the fundamental spatiotemporal scale mismatch between cognition and action. Existing generative VLA policies typically adopt a ``Generation-from-Noise'' paradigm, which disregards this disparity, leading to representation inefficiency and weak condition alignment during optimization. In this work, we propose \textbf{\resvla}, an architecture that shifts the paradigm to ``Refinement-from-Intent.''
Recognizing that robotic motion naturally decomposes into global intent and local dynamics, \resvla\ utilizes spectral analysis to decouple control into a deterministic low-frequency anchor and a stochastic high-frequency residual. 
By anchoring the generative process on the predicted intent, our model focuses strictly on refining local dynamics via a residual diffusion bridge. 
Extensive simulation experiments show that \resvla\ achieves competitive performance, strong robustness to language and robot embodiment perturbations, and faster convergence than standard generative baselines. It also demonstrates strong performance in real-world robot experiments.
\end{abstract}

\begin{figure}[t]
    \centering
    \includegraphics[width=\linewidth]{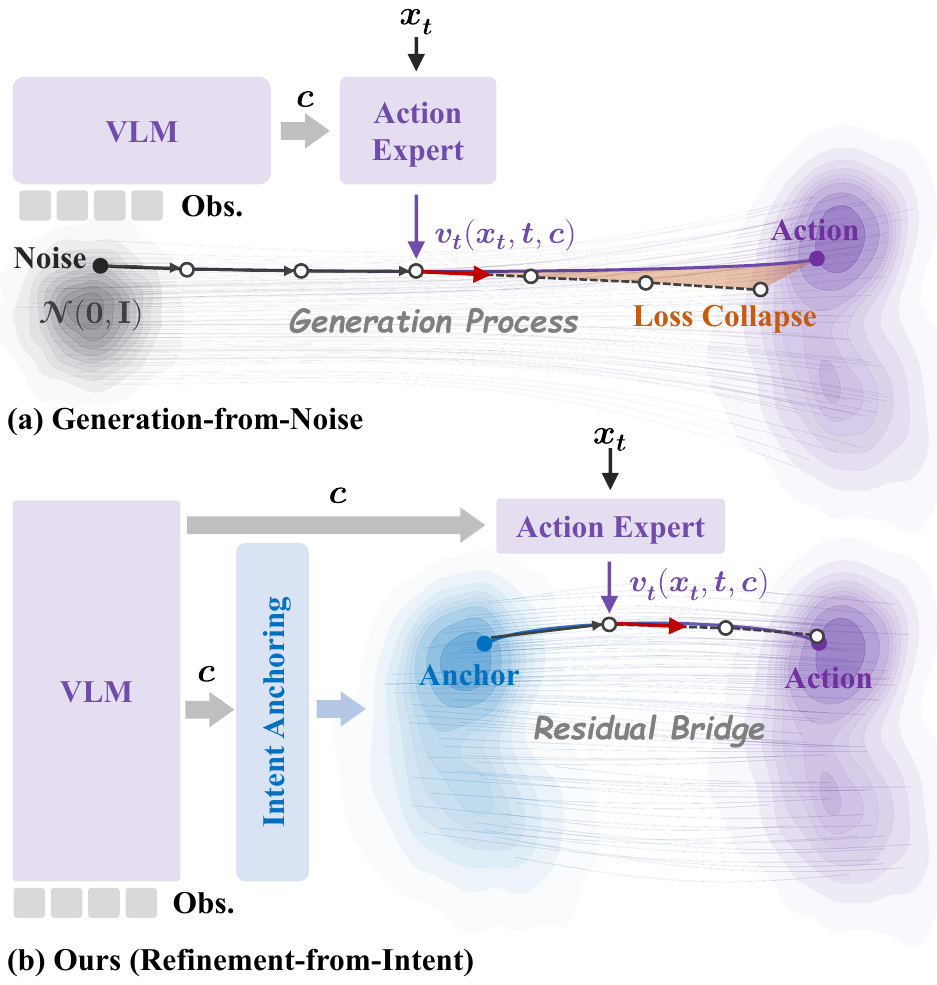}
    \caption{Paradigm comparison. (a) Generation-from-Noise initializes from uninformative noise, leading to inefficient transport paths and ``Loss Collapse'' where trajectories fail to align with semantic instructions. (b) Refinement-from-Intent (Ours) anchors generation on a predicted low-frequency intent. This establishes a short-path ``Residual Bridge'' that focuses strictly on refining high-frequency dynamics to reach the target action manifold.}
      \label{fig:teaser}
\end{figure}

\section{Introduction}
\label{sec:intro}

\emph{``We build too many walls and not enough bridges.''}\\
\hspace*{\fill} — \textit{Isaac Newton}

The rapid advancement of Vision-Language-Action (VLA) models has endowed generalist robots with remarkable capabilities in comprehending complex semantic instructions~\cite{brohan2023rt1roboticstransformerrealworld, brohan2023rt2visionlanguageactionmodelstransfer, kim2024openvla, kim2025fine, black2026pi0visionlanguageactionflowmodel, pertsch2025fast, bjorck2025gr00t}. 
However, effectively grounding this high-level semantic understanding into low-level physical manipulation remains a fundamental challenge in embodied intelligence. 
We posit that this challenge stems from a spatiotemporal scale mismatch between cognition and action. 
Specifically, the cognitive intent derived from VLMs operates at a macro-temporal scale, prioritizing global trajectory planning and long-horizon geometric consistency~\cite{ahn2022icanisay, huang2022innermonologueembodiedreasoning}. In terms of signal characteristics, this manifests as a low-frequency and deterministic distribution. 
In contrast, successful physical interaction necessitates precise modulation at a micro-temporal scale to accommodate contact dynamics, friction, and sensor noise~\cite{hogan85, lee2019makingsensevisiontouch, Levine15}. These execution details inherently exhibit a high-frequency and highly stochastic distribution. 
This disparity suggests that ideal robotic control should not be a process of generation \textit{ex nihilo}, but rather one of \textbf{Iterative Refinement}: a process that progressively injects microscopic physical dynamics while strictly adhering to the guidance of the macroscopic semantic structure.

To surmount the limitations of early discrete tokenization approaches in action precision and smoothness~\cite{brohan2023rt1roboticstransformerrealworld, brohan2023rt2visionlanguageactionmodelstransfer, team2024octo, kim2024openvla}, robotic policy research has pivoted towards a continuous generative policy paradigm. 
Representative architectures, such as $\pi_{0}$~\cite{black2026pi0visionlanguageactionflowmodel} and $\pi_{0.5}$~\cite{intelligence2025pi05visionlanguageactionmodelopenworld}, adopt continuous generative action models based on Flow Matching~\cite{lipman2022flow} and diffusion policies~\cite{chi2025diffusion}, significantly enhancing physical fidelity and multimodal modeling capabilities. However, while these methods excel at capturing complex physical dynamics, they predominantly adhere to a \textbf{``Generation-from-Noise''} paradigm, forcing the model to reconstruct the entire action distribution starting from an uninformative, isotropic Gaussian prior $\mathcal{N}(\mathbf{0}, \mathbf{I})$.
This approach disregards the aforementioned essence of refinement, complicating task execution into a problem of conditional distribution modeling from scratch. 
The cost is primarily twofold:
(1) Representation Inefficiency: the model is compelled to expend the velocity field on rediscovering the explicit global intent, which is computationally wasteful;
(2) Loss Collapse: as highlighted by recent theoretical analyses~\cite{pan2025much, dong2025conditioning}, the independence between the initial noise source and the task condition renders the optimization prone to ignoring fine-grained language instructions. Consequently, the generated actions, while statistically plausible in physics, often fail to align with the critical semantic intent.

Given the structural flaws of \textbf{``Generation-from-Noise''}, we advocate for a return to the essence of control by establishing a new paradigm of \textbf{``Refinement-from-Intent''} (Figure~\ref{fig:teaser}).
Realizing this paradigm requires addressing two core questions: \textbf{Where to start?} and \textbf{How to refine?}
Regarding the first question, we identify that spectral analysis offers a natural perspective for decoupling. 
Since low-frequency components encapsulate the global geometric structure of the trajectory, they naturally constitute the effective anchor for this refinement process, collapsing uncertain intent into a deterministic prior.
Regarding the second question, we introduce the Diffusion Bridge~\cite{debortoli2023diffusionschrodingerbridgeapplications} mechanism to construct the refinement path. 
Unlike standard diffusion models that perform blind exploration from Gaussian noise, diffusion bridge models explicitly establish a directed connection from a known starting point (i.e., our low-frequency intent) to the target distribution (ground-truth actions).
Crucially, because the low-frequency intent is geometrically proximate to the true action, this bridging process no longer requires reconstructing the entire trajectory but only filling in the missing high-frequency residuals.
This not only dramatically shortens the generative path in geometry, but also physically aligns with the control intuition of fine-tuning only local dynamics.

We instantiate this theoretical framework as \textbf{\resvla}, a general architecture that leverages spectral analysis to decouple intent anchoring and residual refinement.
We extensively evaluated \resvla\ across a diverse suite of benchmarks, including LIBERO~\cite{liu2023libero}, LIBERO-Plus~\cite{fei2025libero}, and SimplerEnv~\cite{li24simpler}. 
This evaluation suite covers a broad spectrum of challenges, ranging from long-horizon semantic planning to high-fidelity contact manipulation and cross-embodiment generalization. 
Experimental results demonstrate that \resvla\ achieves strong overall performance across this extensive testing spectrum. 
Specifically, thanks to the semantic locking effect of the low-frequency anchor, our method exhibits significantly superior stability in long-horizon tasks compared to pure generative baselines such as $\pi_{0}$. 
Meanwhile, the short-path characteristic of the residual bridge enables the model to more efficiently master complex contact dynamics. 
Furthermore, substantial improvements in training convergence speed and inference efficiency further validate that introducing structured physical priors into large models is a critical pathway toward efficient and robust generalist robot control.

Our contributions are summarized as follows:
\begin{itemize}
\vspace{-1.5ex}
\item We identify source-condition independence as a factor behind training inefficiency and loss collapse in generative VLA policies. To address this, we propose a residual refinement perspective that maximizes the mutual information between the source and the condition.
\item We introduce \textbf{\resvla}, which fuses deterministic regression and generative flow matching via spectral orthogonality. By anchoring generation on a condition-dependent low-frequency source, we effectively reconcile the conflict between global semantic alignment and local dynamic fidelity.
\item Extensive evaluations demonstrate that our \textbf{\resvla}\ achieves strong performance. Notably, it exhibits superior robustness in long-horizon and contact-rich tasks while converging faster than denoising baselines, validating the efficiency of our residual bridging paradigm.
\end{itemize}

\section{Related Work}

\subsection{Evolution of Vision-Language-Action Models}
The early development of generative robotic control was primarily dominated by the discrete autoregressive paradigm. 
With RT-2~\cite{brohan2023rt2visionlanguageactionmodelstransfer} and OpenVLA~\cite{kim2024openvla} as cornerstones, this lineage leveraged the logical reasoning capabilities of LLMs, while subsequent works augmented this architecture through various mechanisms~\cite{zhang2024grape, lu2025vla, zhao2025cot, cen2025worldvla}. 
However, despite their powerful semantic planning capabilities, the quantization error inherent in discrete tokenization remains an insurmountable barrier for fine-grained physical manipulation.
To break through this precision bottleneck, the field has pivoted towards the continuous generative paradigm. 
Represented by Diffusion Policy~\cite{chi2025diffusion}, Octo~\cite{team2024octo}, and $\pi_0$~\cite{black2026pi0visionlanguageactionflowmodel}, these methods achieved high-fidelity physical control by directly modeling continuous distributions. 
Subsequent works further refined this direction~\cite{pertsch2025fast, intelligence2025pi05visionlanguageactionmodelopenworld, qu2025spatialvla, zheng2024tracevla, shukor2025smolvla, wang2025vlaadapter}. 
Although the continuous paradigm resolved precision issues, most of these models still follow the \textbf{``Generation-from-Noise''} paradigm, compelling them to reconstruct intent from uninformative noise, leading to training inefficiency and potential instruction failure. 
\textbf{\resvla}\ aims to rectify this structural deficiency: we reject the notion of generation from scratch, instead utilizing the low-frequency intent determined by the VLM as an anchor and employing the model solely to refine high-frequency physical dynamics via our residual diffusion bridge.

\subsection{Diffusion Bridges and Optimization Pathology}
Diffusion Bridges, such as Schrödinger Bridges~\cite{de2021diffusion} and I$^2$SB~\cite{liu20232}, generalize standard diffusion by enabling probabilistic connections between source and target distributions. 
This flexibility establishes an ideal framework for refining sub-optimal priors (e.g., coarse trajectories) into optimal posteriors, showing promise in image restoration~\cite{wang2025residual} and robotic motion planning~\cite{nguyen2025flowmp}.
However, in conditional control, source design plays an important role in optimization stability. 
Recent theoretical analyses~\cite{dong2025conditioning} identify a phenomenon termed \textbf{``Loss Collapse''}: when the source distribution is independent of the task condition (e.g., instructions), optimization can suffer from weakened conditioning signals, causing the model to ignore fine-grained conditions. 
Related to this issue, recent work has also explored condition-aware or informative source design. 
In diffusion policies, Cocos~\cite{dong2025conditioning} introduces condition-dependent sources to mitigate loss collapse; VITA~\cite{gao2025vita} uses visual latents as the source of flow; CAR-Flow~\cite{chen2025car} reparameterizes source and target distributions in a condition-aware manner; and Prior Does Matter~\cite{ren2025prior} and Don’t Start from Scratch~\cite{walke2023don} show the benefits of informative priors beyond pure Gaussian initialization. 
The core contribution of \textbf{\resvla}\ lies not in proposing a new bridge algorithm, but in instantiating condition-dependent source construction for VLA control. 
By constructing a condition-dependent source, we enable stable application of diffusion bridges to VLA control.

\subsection{Structured Priors in Robotic Control}
To construct such a robust source distribution, incorporating structured priors has been a longstanding strategy in robotic control. 
Residual learning classically employs analytical controllers (e.g., PID) as baselines, learning only residual actions to compensate for dynamic errors~\cite{silver2018residual, johannink2018residualreinforcementlearningrobot}. 
In the generative domain, approaches like Decision Diffuser~\cite{ajay2023conditionalgenerativemodelingneed} have also embraced this intuition, demonstrating how trajectory generation can be formulated as an iterative refinement process under constraints.
In representation learning, frequency and hierarchical structures offer another form of prior: 
FAST~\cite{pertsch2025fast} utilizes DCT for action compression; 
FreqPolicy~\cite{zhong2025freqpolicy} validates the stability of low-frequency prediction; 
and H$^3$DP~\cite{lu2025h3dptriplyhierarchicaldiffusionpolicy} optimizes long-horizon generation via spatiotemporal hierarchy.
\textbf{\resvla}\ revisits these concepts within a generative framework. 
Unlike traditional residual learning that relies on handcrafted baselines and is distinct from the specific architectural designs of FreqPolicy or H$^3$DP, we leverage the data-driven low-frequency intent as a general structured anchor. 
This not only provides a semantically grounded starting point for the diffusion bridge but also transforms the generation task into a more optimization-friendly residual refinement problem, achieving dual gains in efficiency and precision.

\section{Theoretical Formulation: Control via Iterative Refinement}
\label{sec:theory}

\subsection{Refinement Dynamics in Control Policies}

The core objective of robot learning is to approximate the expert conditional distribution $p_1(\mathbf{x}|\mathbf{c})$. 
Following the insights from Minimal Iterative Policy (MIP)~\cite{pan2025much}, we posit that the efficacy of a policy in high-precision tasks stems not merely from distributional matching, but from its capability for \textbf{Iterative Refinement}, the ability to repeatedly project an initial guess back onto the expert manifold $\mathcal{M}$ during inference. 
We analyze existing control paradigms through this lens.

\textbf{Deterministic Regression.} 
Regression-based methods (e.g., ACT~\cite{zhao2023learning}) model the policy as a deterministic mapping $\mathbf{x} = f_\theta(\mathbf{c})$, minimizing the expected risk:
\begin{equation}
    \min_\theta \mathbb{E}_{(\mathbf{c}, \mathbf{x}^*) \sim \mathcal{D}} \|\mathbf{x}^* - f_\theta(\mathbf{c})\|^2.
\end{equation}
Mathematically, this collapses the target distribution into a Dirac delta $p(\mathbf{x}|\mathbf{c}) = \delta(\mathbf{x} - f_\theta(\mathbf{c}))$. 
While computationally efficient, this formulation represents a single-step projection. 
As noted in MIP, while this paradigm performs adequately in simple tasks, in high-precision manipulation, the lack of subsequent iterative computation deprives the model of the mechanism to correct initial prediction deviations. 
Consequently, predicted trajectories are prone to drifting off the expert manifold due to error accumulation, manifesting as the ``regression to the mean'' phenomenon where high-frequency details are lost.

\textbf{Discrete Autoregressive Generation.} 
To incorporate sequential dependencies, methods like OpenVLA~\cite{kim2024openvla} discretize the action space into token sequences $\mathbf{x} \approx (s_1, \dots, s_L)$. The generation process factorizes as: $p(\mathbf{x}|\mathbf{c}) = \prod p(s_i | s_{<i}, \mathbf{c})$. 
While acting as a sequential subspace refinement, this paradigm suffers from a fundamental precision bottleneck:
\begin{proposition}[Irreducible Quantization Noise]
\label{prop:quantization_limit}
    For a uniform quantization $\mathcal{Q}$ with resolution $\Delta$, assuming locally uniform ground-truth $\mathbf{x}^*$, the expected reconstruction MSE is strictly lower-bounded:
    \vspace{-1ex}
    \begin{equation}
        \mathbb{E}_{\mathbf{x}^*} \left[ \|\mathbf{x}^* - \mathcal{Q}^{-1}(\mathcal{Q}(\mathbf{x}^*))\|^2 \right] = \frac{\Delta^2}{12}.
    \end{equation}
\end{proposition}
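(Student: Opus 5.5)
The plan is to reduce the statement to a one-dimensional computation on a single quantization cell. I read ``locally uniform'' as: conditioned on the cell it falls in, $\mathbf{x}^*$ is uniform on that cell. I also take $\mathcal{Q}^{-1}$ to map each code to the midpoint of its cell, which is the standard mid-rise or mid-tread reconstruction.

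First I would fix a scalar coordinate. Let the quantization cells be $I_k = [k\Delta - \Delta/2,\, k\Delta + \Delta/2)$, so that $\mathcal{Q}^{-1}(\mathcal{Q}(x)) = k\Delta$ for $x \in I_k$. By the tower property,
\begin{equation*}
\mathbb{E}\big[(x^* - \mathcal{Q}^{-1}(\mathcal{Q}(x^*)))^2\big] = \sum_k \Pr(x^* \in I_k)\, \mathbb{E}\big[(x^* - k\Delta)^2 \,\big|\, x^* \in I_k\big].
\end{equation*}
Under the local-uniformity assumption, the error $e = x^* - k\Delta$ is uniform on $[-\Delta/2, \Delta/2)$ for every $k$. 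Hence each conditional term equals
\begin{equation*}
\frac{1}{\Delta}\int_{-\Delta/2}^{\Delta/2} e^2\,de = \frac{\Delta^2}{12}.
\end{equation*}
This does not depend on $k$, so summing against the cell probabilities gives exactly $\Delta^2/12$.

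Second, I would address the ``strictly lower-bounded'' claim. The midpoint is the minimizer of $\mathbb{E}[(e-c)^2]$ over constants $c$ for a uniform error, since the optimal $c$ is the conditional mean. So no other deterministic decoder $\mathcal{Q}^{-1}$ can do better under this quantizer, and $\Delta^2/12$ is the irreducible floor. The floor is strictly positive whenever $\Delta>0$, regardless of how well the autoregressive model predicts the tokens $s_i$. Even a perfect token predictor, meaning $\mathcal{Q}(\mathbf{x}^*)$ is recovered exactly, still incurs this error.

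For the vector case $\mathbf{x}^*\in\mathbb{R}^d$, the squared norm splits into a sum over coordinates. Applying the scalar argument per coordinate gives $\Delta^2/12$ per dimension, hence $d\Delta^2/12$ in total. The displayed statement therefore reads naturally as a per-dimension (or $d=1$) value, and I would state that normalization explicitly.

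The main obstacle is not the calculation but making the modeling assumptions precise. Two points need care. ``Locally uniform'' must be read as conditional uniformity within each cell, which holds exactly for piecewise-uniform densities and approximately for smooth densities when $\Delta$ is small, by a Taylor or Bennett-type argument. The decoder must also be the midpoint, not an arbitrary codebook. I would state both as standing conventions rather than attempt a general bound.
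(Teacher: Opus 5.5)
Your proposal is correct and follows the same route as the paper: under within-cell uniformity the error is uniform on $[-\Delta/2,\Delta/2]$, integrating $e^2/\Delta$ gives $\Delta^2/12$, and the vector case is read per dimension (the paper also notes that the summed error scales as $D\Delta^2/12$). Your tower-property sum over cells and your remark that the midpoint is the conditional mean, hence the MSE-optimal decoder, only make explicit what the paper leaves implicit, and they slightly sharpen the ``irreducible'' reading of the claim.
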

\vspace{-2ex}
This term $\Delta^2/12$ represents a permanent \textbf{structural deadband}. This irreducible noise prevents asymptotic zero-error convergence, rendering discrete policies theoretically inadequate for high-precision manipulation.

\begin{figure*}[t]
    \centering
    \includegraphics[width=0.85\linewidth]{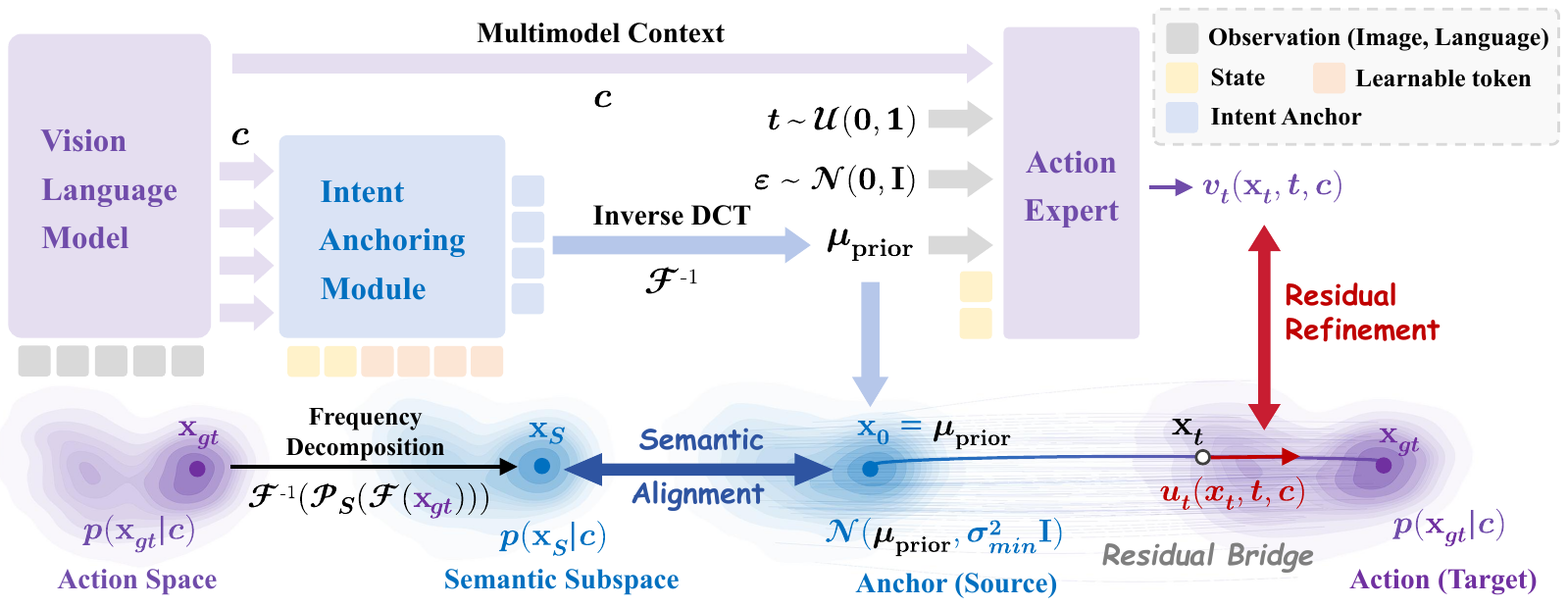}
    \caption{Overview of the \textbf{\resvla}\ framework. 
    The architecture consists of two cascading stages:(1) Intent Anchoring: The Intent Anchoring Module leverages VLM features to regress the low-frequency component $\mathbf{x}_{\mathcal{S}}$, constructing a condition-dependent source $p_0(\mathbf{x}|\mathbf{c})$. (2) Residual Bridging: A flow matching expert learns the residual transport path (red arrow) from this anchor to the full action $\mathbf{x}_{gt}$, focusing on refining high-frequency dynamics.
    }
    \label{fig:pipeline}
    \vspace{-1ex}
\end{figure*}

\textbf{Continuous Generative Models.} 
Continuous methods (e.g., diffusion policy, $\pi_0$) model action generation as a time-reversal stochastic process. The refinement dynamics are governed by the reverse-time SDE:
\begin{equation}
    \mathrm{d}\mathbf{x}_t = [\mathbf{f}(\mathbf{x}_t, t) - g^2(t) \nabla_{\mathbf{x}_t} \log p_t(\mathbf{x}_t | \mathbf{c})] \mathrm{d}t + g(t) \mathrm{d}\bar{\mathbf{w}}_t.
\end{equation}
Under discrete time steps $k$, the update rule (e.g., Euler-Maruyama) explicitly showcases the refinement process:
\begin{equation}
    \mathbf{x}_{k-1} \leftarrow \mathbf{x}_k + \eta \nabla_{\mathbf{x}} \log p_k(\mathbf{x}_k | \mathbf{c}) + \sigma \mathbf{z}.
\end{equation}
The gradient term $\nabla \log p$ provides the \textbf{Manifold Adherence} force that continuously pulls the state back towards the manifold. 
However, standard implementations typically initialize from an uninformative prior $p_0 = \mathcal{N}(\mathbf{0}, \mathbf{I})$ ($t=0$ as source and $t=1$ as target). 
This implies the model must start from pure chaos and undergo extensive iterations to reach the manifold.

\subsection{Residual Diffusion Bridge Formulation}

To overcome the inefficiency of initializing from uninformative noise while retaining continuous precision, we adopt the \textbf{Diffusion Bridge} framework, which allows establishing probabilistic connections between any given source distribution $p_0$ and target data distribution $p_1$. 
We leverage Conditional Flow Matching (CFM) to instantiate this process, aiming to learn the optimal transport path connecting $p_0$ and $p_1$. The path follows displacement interpolation:
\begin{equation}
    \mathbf{x}_t = (1-t)\mathbf{x}_0 + t\mathbf{x}_1 = \mathbf{x}_0 + t(\mathbf{x}_1 - \mathbf{x}_0).
\end{equation}
This formulation provides the mathematical foundation for introducing our residual refinement paradigm, the velocity field $v_t$ that the model needs to learn is no longer a complex denoising field, but a constant residual vector:
\begin{equation}
    v_t(\mathbf{x}_t) = \frac{\mathrm{d}\mathbf{x}_t}{\mathrm{d}t} = \mathbf{x}_1 - \mathbf{x}_0 \triangleq \Delta \mathbf{x}_{\text{residual}}.
\end{equation}
The mathematical advantage of this perspective lies in the minimization of transport cost.

\begin{proposition}[Minimal Transport Cost]
    Assuming the source distribution $p_0$ is anchored near the target manifold, i.e., $\mathbb{E}[\|\Delta \mathbf{x}_{\text{residual}}\|^2] \ll \mathbb{E}[\|\mathbf{x}_1\|^2]$, the kinetic transport cost required for residual bridging is significantly lower than that of generating from standard noise. 
\end{proposition}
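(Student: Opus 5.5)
We compare kinetic energies along the straight-line paths. Under the displacement interpolation $\mathbf{x}_t = (1-t)\mathbf{x}_0 + t\mathbf{x}_1$, the conditional velocity is the constant $\mathbf{x}_1-\mathbf{x}_0$. The kinetic cost of a coupling $\pi(\mathbf{x}_0,\mathbf{x}_1)$ is therefore
\begin{equation}
\mathcal{K}(\pi) = \mathbb{E}_{\pi}\int_0^1 \Big\|\frac{\mathrm{d}\mathbf{x}_t}{\mathrm{d}t}\Big\|^2 \mathrm{d}t = \mathbb{E}_{\pi}\|\mathbf{x}_1-\mathbf{x}_0\|^2 .
\end{equation}
This quantity also upper-bounds the kinetic energy of the marginal velocity field $v_t(\mathbf{x}) = \mathbb{E}[\mathbf{x}_1-\mathbf{x}_0\mid \mathbf{x}_t=\mathbf{x}]$ that CFM actually learns. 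The reason is that, by Jensen's inequality (conditional expectation contracts the $L^2$ norm), $\mathbb{E}\|v_t(\mathbf{x}_t)\|^2 \le \mathbb{E}\|\mathbf{x}_1-\mathbf{x}_0\|^2$ for every $t$. So the residual bridge has cost at most $\mathbb{E}\|\Delta\mathbf{x}_{\text{residual}}\|^2$, read off directly from the definition of $\Delta\mathbf{x}_{\text{residual}}$ in the velocity equation above.

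For the baseline, take $\mathbf{x}_0 = \boldsymbol{\epsilon}\sim\mathcal{N}(\mathbf{0},\mathbf{I})$ drawn independently of $(\mathbf{x}_1,\mathbf{c})$, as in standard flow matching. Expanding the square gives
\begin{equation}
\mathbb{E}\|\mathbf{x}_1-\boldsymbol{\epsilon}\|^2 = \mathbb{E}\|\mathbf{x}_1\|^2 - 2\,\mathbb{E}\langle \mathbf{x}_1,\boldsymbol{\epsilon}\rangle + \mathbb{E}\|\boldsymbol{\epsilon}\|^2 = \mathbb{E}\|\mathbf{x}_1\|^2 + d,
\end{equation}
since the cross term vanishes by independence and $\mathbb{E}\boldsymbol{\epsilon}=\mathbf{0}$. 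Combining this with the hypothesis $\mathbb{E}\|\Delta\mathbf{x}_{\text{residual}}\|^2 \ll \mathbb{E}\|\mathbf{x}_1\|^2$ yields
\begin{equation}
\mathcal{K}_{\text{res}} \le \mathbb{E}\|\Delta\mathbf{x}_{\text{residual}}\|^2 \ll \mathbb{E}\|\mathbf{x}_1\|^2 \le \mathbb{E}\|\mathbf{x}_1\|^2 + d = \mathcal{K}_{\text{noise}},
\end{equation}
which is the claim.

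The main obstacle is making the baseline comparison fair. The independent Gaussian coupling is not the optimal coupling, and one could object that minibatch-OT flow matching lowers the noise-side cost. To handle this, I would also bound the optimal cost $W_2^2(\mathcal{N}(\mathbf{0},\mathbf{I}),p_1)$ from below. First try the triangle inequality for $W_2$ together with second moments, $W_2(\mathcal{N}(\mathbf{0},\mathbf{I}),p_1)\ge \big|\sqrt{\mathbb{E}\|\mathbf{x}_1\|^2}-\sqrt{d}\big|$. This keeps the noise cost of order $\mathbb{E}\|\mathbf{x}_1\|^2$ whenever the action magnitude dominates $d$, or, for normalized actions, whenever the data concentrate on a low-dimensional manifold far in $W_2$ from the isotropic Gaussian. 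Since the statement is qualitative ("significantly lower"), I would state the independent-coupling comparison as the main result and note this OT refinement as a remark.
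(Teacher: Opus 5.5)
Your proposal is correct and follows the paper's proof. Both reduce the kinetic cost of the straight-line path to $\mathbb{E}\|\mathbf{x}_1-\mathbf{x}_0\|^2$, evaluate it as $\mathbb{E}\|\mathbf{x}_1\|^2+d$ for independent Gaussian noise and as $\mathbb{E}\|\Delta\mathbf{x}_{\text{residual}}\|^2$ for the anchored source, and conclude from the hypothesis. Your exact cancellation of the cross term (the paper only writes $\approx$), the Jensen bound on the learned marginal field, and the $W_2$ remark are sound refinements beyond the paper; just keep both sides of your final chain at the coupling level, since if $\mathcal{K}_{\text{res}}$ is read as the marginal energy, the like-for-like noise-side quantity is the $W_2^2$ lower bound (via Benamou--Brenier), not $\mathbb{E}\|\mathbf{x}_1\|^2+d$.
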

This implies that the model only needs to learn a low-energy fine-tuning field, geometrically reducing learning difficulty.

\subsection{Optimization Pathology: Loss Collapse}

Although residual bridging guarantees low energy consumption geometrically, the optimization process may still fail if source distribution $p_0$ is chosen inappropriately. Specifically, setting $p_0$ as noise independent of condition $\mathbf{c}$ (i.e., $p_0 \perp \mathbf{c}$) exposes us to a critical issue known as \textbf{``Loss Collapse''}.

\begin{theorem}[Loss Collapse~\cite{dong2025conditioning}]
    If the source distribution $p_0(\mathbf{x})$ is independent of $\mathbf{c}$, the mutual information $I(\mathbf{x}_0; \mathbf{c}) = 0$. Consequently, as $t \to 0$, the true conditional vector field $u_t(\mathbf{x}|\mathbf{c})$ degenerates into the marginal vector field, causing the conditional gradient at the ground truth to vanish:
    \begin{equation}
        \lim_{t \to 0} \mathbb{E}_{p_t(\mathbf{x}|\mathbf{c})} \left[ \nabla_{\mathbf{c}} \| v_\theta(\mathbf{x}, t, \mathbf{c}) - u_t(\mathbf{x}|\mathbf{x}_1) \|^2 \right] \approx 0.
    \end{equation}
\end{theorem}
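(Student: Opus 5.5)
The plan has three steps: make the setting precise, show the learned field becomes unconditional near $t=0$, and then show the $\mathbf{c}$-gradient of the loss averages to nearly zero. Throughout we use the linear bridge $\mathbf{x}_t=(1-t)\mathbf{x}_0+t\mathbf{x}_1$, with the conditional target $u_t(\mathbf{x}\mid\mathbf{x}_1)$ realized as $\mathbf{x}_1-\mathbf{x}_0$. The minimizer of the CFM objective is the conditional marginal field
\begin{equation}
u_t(\mathbf{x}\mid\mathbf{c})=\mathbb{E}\left[\mathbf{x}_1-\mathbf{x}_0 \mid \mathbf{x}_t=\mathbf{x},\mathbf{c}\right].
\end{equation}
First, we note that independence $p_0(\mathbf{x}\mid\mathbf{c})=p_0(\mathbf{x})$ gives $I(\mathbf{x}_0;\mathbf{c})=0$ directly from the definition of mutual information. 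Second, we show that as $t\to0$ the conditional field collapses to an unconditional one. Third, we use this to show that the $\mathbf{c}$-gradient of the regression loss, averaged over $p_t(\mathbf{x}\mid\mathbf{c})$, vanishes in the limit.

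For the second step, write $p_t(\mathbf{x}\mid\mathbf{c})=\int p_0\bigl((\mathbf{x}-t\mathbf{x}_1)/(1-t)\bigr)\,(1-t)^{-d}\,p_1(\mathbf{x}_1\mid\mathbf{c})\,\mathrm{d}\mathbf{x}_1$. Since $p_0$ is smooth and does not depend on $\mathbf{c}$, $p_t(\cdot\mid\mathbf{c})\to p_0$ as $t\to0$, uniformly in $\mathbf{c}$. By Bayes, the posterior of $\mathbf{x}_1$ given $\mathbf{x}_t=\mathbf{x}$ is proportional to $p_0\bigl((\mathbf{x}-t\mathbf{x}_1)/(1-t)\bigr)\,p_1(\mathbf{x}_1\mid\mathbf{c})$, which tends to the prior $p_1(\mathbf{x}_1\mid\mathbf{c})$. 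Hence
\begin{equation}
u_t(\mathbf{x}\mid\mathbf{c})\to \mathbb{E}[\mathbf{x}_1\mid\mathbf{c}]-\mathbf{x}.
\end{equation}
The observation $\mathbf{x}_t$ carries no information about $\mathbf{c}$ at $t=0$, so the regression target is essentially $\mathbf{x}_1-\mathbf{x}_0$ with $\mathbf{x}_0$ independent of $\mathbf{c}$. Any $\mathbf{c}$-dependence of the optimum enters only through a low-information mean term. The per-sample target is also dominated by noise: $\mathrm{Var}(\mathbf{x}_0)$ swamps the conditional signal, and the target does not depend on $\mathbf{c}$ given $\mathbf{x}_0$ except through $\mathbf{x}_1$.

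For the third step, differentiate:
\begin{equation}
\nabla_{\mathbf{c}}\|v_\theta-u_t(\mathbf{x}\mid\mathbf{x}_1)\|^2=2\,(\nabla_{\mathbf{c}}v_\theta)^\top\bigl(v_\theta-u_t(\mathbf{x}\mid\mathbf{x}_1)\bigr).
\end{equation}
Taking the expectation over $\mathbf{x}\sim p_t(\cdot\mid\mathbf{c})$ and $\mathbf{x}_1$ from the posterior, the residual averages to $v_\theta-u_t(\mathbf{x}\mid\mathbf{c})$. At a (near-)optimum $v_\theta\approx u_t(\cdot\mid\mathbf{c})$, this average tends to zero. The statement is only claimed approximately, with $\approx$, so I would phrase the result as: the expected gradient is bounded by $\|\nabla_{\mathbf{c}}v_\theta\|\cdot\|v_\theta-u_t(\cdot\mid\mathbf{c})\|_{L^2(p_t)}$. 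By Cauchy--Schwarz and the collapse from the second step, this bound goes to zero as $t\to0$ whenever $v_\theta$ fits the unconditional limit. Uniform integrability, from finite second moments of $p_0$ and $p_1$, justifies exchanging limit and expectation.

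The main obstacle is the third step. Literally, the $\mathbf{c}$-gradient vanishes only at the fitted field, and at $t=0$ the limit $\mathbb{E}[\mathbf{x}_1\mid\mathbf{c}]-\mathbf{x}$ still depends on $\mathbf{c}$. I would therefore follow Cocos and make precise what ``degenerates into the marginal field'' means: the signal-to-noise ratio of the $\mathbf{c}$-informative part of the per-sample gradient is $O(t)$ relative to the $\mathbf{c}$-independent noise contributed by $\mathbf{x}_0$. I would formalize the vanishing as a statement about this ratio, or equivalently about the posterior of $\mathbf{c}$ given $\mathbf{x}_t$ converging to its prior, rather than claiming that an exact gradient is identically zero.
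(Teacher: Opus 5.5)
\textbf{Verdict: there is a genuine gap.} Your Step 2 computation is correct, but it shows the opposite of degeneration. The ``low-information mean term'' $\mathbb{E}[\mathbf{x}_1\mid\mathbf{c}]$ in the limit $u_t(\mathbf{x}\mid\mathbf{c})\to\mathbb{E}[\mathbf{x}_1\mid\mathbf{c}]-\mathbf{x}$ is exactly the task-dependent part of the field. Step 3 does not recover the theorem, for three reasons.
\begin{enumerate}
\item The identity $\mathbb{E}\bigl[\nabla_{\mathbf{c}}\|v_\theta-u_t(\mathbf{x}\mid\mathbf{x}_1)\|^2\bigr]=2\,\mathbb{E}\bigl[(\nabla_{\mathbf{c}}v_\theta)^\top\bigl(v_\theta-u_t(\mathbf{x}\mid\mathbf{c})\bigr)\bigr]$, and its vanishing at $v_\theta=u_t(\cdot\mid\mathbf{c})$, is just the first-order optimality condition of least-squares regression. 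It holds for every $t$ and every source, including the condition-dependent source $\mathcal{N}(\bm{\mu}_{\text{prior}}(\mathbf{c}),\sigma_{\min}^2\mathbf{I})$ that ResVLA proposes as the remedy. It therefore uses neither $I(\mathbf{x}_0;\mathbf{c})=0$ nor $t\to0$, and cannot be the ``consequently'' of the theorem.
\item The clause ``whenever $v_\theta$ fits the unconditional limit'' contradicts your own Step 2, since the limit you derived is conditional. A field fitting an unconditional limit such as $\mathbb{E}[\mathbf{x}_1]-\mathbf{x}$ leaves an $L^2$ residual tending to $\|\mathbb{E}[\mathbf{x}_1\mid\mathbf{c}]-\mathbb{E}[\mathbf{x}_1]\|$, not zero. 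There is no ``collapse from the second step'' to invoke.
\item The SNR fallback is asserted rather than derived, and it points the wrong way. As $t\to0$ the input $\mathbf{x}_t\approx\mathbf{x}_0$ essentially reveals $\mathbf{x}_0$, so $\mathbf{x}_0$ contributes almost no unexplained noise to the target $\mathbf{x}_1-\mathbf{x}_0$. The remaining fluctuation is governed by $\mathrm{Cov}(\mathbf{x}_1\mid\mathbf{c})$ and the $\mathbf{c}$-dependent signal is $\mathbb{E}[\mathbf{x}_1\mid\mathbf{c}]$; both are of order one, so no $O(t)$ ratio appears. The same objection applies to your remark that $\mathrm{Var}(\mathbf{x}_0)$ swamps the signal. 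That the posterior of $\mathbf{c}$ given $\mathbf{x}_t$ tends to its prior is true but irrelevant, because $v_\theta$ receives $\mathbf{c}$ as an explicit input.
\end{enumerate}

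The paper's proof gets past your obstacle with a step you do not have. It asserts $u_t(\mathbf{x}\mid\mathbf{c})\propto\nabla_{\mathbf{x}}\log p_t(\mathbf{x}\mid\mathbf{c})$, notes that at $t=0$ the score of $\mathcal{N}(\mathbf{0},\mathbf{I})$ is $\mathbf{c}$-independent, and concludes that $u_0^*$ carries no task information. Do not import that step, because your Step 2 shows it fails. For the Gaussian linear bridge used there,
\begin{equation*}
u_t(\mathbf{x}\mid\mathbf{c})=\frac{\mathbb{E}[\mathbf{x}_1\mid\mathbf{x}_t=\mathbf{x},\mathbf{c}]-\mathbf{x}}{1-t}=\frac{\mathbf{x}+(1-t)\nabla_{\mathbf{x}}\log p_t(\mathbf{x}\mid\mathbf{c})}{t}.
\end{equation*}
This is an affine relation with a $1/t$ factor, not a proportionality. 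The score is $\mathbf{c}$-independent only at leading order; its $\mathbf{c}$-dependence starts at order $t$, and dividing by $t$ lifts it to the order-one term $\mathbb{E}[\mathbf{x}_1\mid\mathbf{c}]$. So the statement as literally written (degeneration to the marginal field as $t\to0$) is not provable by either route. To turn your plan into a proof, commit to a precise reformulation and actually prove it, rather than leaving it as a closing remark. One candidate, in the spirit of Cocos, is a bound on how far the conditional objective can be from the marginal one when conditions are hard to distinguish under a $\mathbf{c}$-independent source.
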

\vspace{-1ex}

This implies that initializing from noise can hinder the model's ability to attend to fine-grained instructions. It further indicates that a residual path alone is insufficient; the starting point itself should contain semantic information. 
To circumvent loss collapse, we construct a \textbf{Condition-Dependent Source} $p_0(\mathbf{x}|\mathbf{c})$. 
This serves not only to shorten the geometric distance but, more importantly, to inject non-trivial mutual information $I(\mathbf{x}_0; \mathbf{c}) > 0$ from the onset, thereby allowing gradient flow to capture semantic discrepancies.
Consequently, establishing a semantic anchor is not merely a heuristic choice, but a theoretical prerequisite for preventing instruction drift in generative control policies.

\section{ResVLA}
\label{sec:method}

Building upon the residual diffusion bridge perspective established in Sec.\ref{sec:theory}, we introduce \textbf{\resvla} (Figure \ref{fig:pipeline}). To instantiate this framework, we seek physical counterparts for the mathematical ``anchor'' and ``residual''. \resvla\ realizes this via spectral analysis: we disentangle action generation into a hierarchical bridging problem, first anchoring the low-frequency semantic intent, then refining high-frequency execution dynamics via flow matching.

\textbf{Frequency Decomposition:}
To physically instantiate the decoupling of anchor and residual, we turn to \textbf{Frequency Analysis}. Physical intuition suggests that a robot's global intent (e.g., ``approaching an object'') manifests as long-horizon, smooth motion trends, whereas specific interaction details (e.g., ``contact adjustment'') appear as local, high-frequency jitter. This natural separation of physical characteristics aligns precisely with the mathematical structure we sought in Sec.~\ref{sec:theory}. Let $\mathcal{F}$ denote the Discrete Cosine Transform (DCT). We partition the action space $\mathcal{X}$ into two complementary subspaces:
\vspace{-1ex}
\begin{itemize}
    \item \textbf{Semantic Subspace ($\mathcal{S}$):} Spanned by the lowest $k$ frequency modes, where $k$ is a learnable cutoff. It captures the \textbf{low-frequency global trajectory structure}, corresponding to the deterministic anchor.
    \vspace{-0.5ex}
    \item \textbf{Execution Subspace ($\mathcal{E}$):} As the orthogonal complement of $\mathcal{S}$, it captures \textbf{high-frequency detailed jitter}, corresponding to the stochastic residual.
\end{itemize}
\vspace{-1ex}
Consequently, for any ground-truth action $\mathbf{x}_{gt}$, there exists a unique decomposition $\mathbf{x}_{gt} = \mathbf{x}_{\mathcal{S}} + \mathbf{x}_{\mathcal{E}}$. We model the semantic component as $\mathbf{x}_{\mathcal{S}} = \mathcal{F}^{-1}(\mathcal{P}_{\mathcal{S}}(\mathcal{F}(\mathbf{x}_{gt})))$, where $\mathcal{P}_{\mathcal{S}}$ is the low-pass projection operator that retains only the first $k$ spectral coefficients (Figure~\ref{fig:pipeline} bottom left).

\textbf{Intent Anchoring and Source Construction:}
According to Theorem 1 (Loss Collapse), an effective refinement process must originate from a condition-dependent source. Given that $\mathbf{x}_{\mathcal{S}}$ carries deterministic semantic intent, our goal is to learn a mapping from condition $\mathbf{c}$ to the semantic subspace $\mathcal{S}$ as the starting point for bridging.
To this end, we propose the \textbf{Intent Anchoring Module} (Figure~\ref{fig:pipeline} middle). This module leverages the VLM backbone to extract semantic features and directly regresses the low-frequency component $\bm{\mu}_{\text{prior}}(\mathbf{c}) \approx \mathbf{x}_{\mathcal{S}}$ via a regression head. Centered on this prediction, we construct the source distribution of the diffusion bridge as:
\begin{equation}
    p_0(\mathbf{x} | \mathbf{c}) = \mathcal{N}(\mathbf{x}; \bm{\mu}_{\text{prior}}(\mathbf{c}), \sigma_{\text{min}}^2 \mathbf{I}).
\end{equation}
This distribution maximizes the mutual information $I(\mathbf{x}_0; \mathbf{c})$ at $t=0$, providing a strongly semantically guided initialization for the subsequent refinement process.

\textbf{Residual Flow Matching:}
Having established the semantic anchor, the refinement task transforms into injecting high-frequency details via the diffusion bridge. We employ residual flow matching to learn the transport path from prior $p_0$ to posterior $p_1$ (Figure~\ref{fig:pipeline} right). 
Following the residual dynamics in Sec.~\ref{sec:theory}, the target vector field $u_t$ is dominated by the high-frequency component. Considering that $\mathbf{x}_0$ is sampled from the distribution centered at $\bm{\mu}_{\text{prior}}$, we have:
\begin{equation}
    u_t(\mathbf{x}_t | \mathbf{x}_0, \mathbf{x}_{gt}) = \mathbf{x}_{gt} - \mathbf{x}_0 \approx \underbrace{\mathbf{x}_{\mathcal{E}}}_{\text{Refinement}} - \underbrace{\bm{\epsilon}}_{\text{Noise}}.
\end{equation}
This implies that the core task of the flow matching network $v_\theta$ is to fit this high-frequency residual $\mathbf{x}_{\mathcal{E}}$.

\textbf{Unified Optimization and Inference:}
Our framework adopts an end-to-end bi-level optimization objective, strictly aligned with the spectral hierarchy:
\begin{equation}
\mathcal{L}_{\text{total}} = 
\lambda_{\text{sem}} \underbrace{\| \bm{\mu}_{\text{prior}} - \mathbf{x}_{\mathcal{S}} \|^2}_{\text{Semantic Alignment}} + 
\underbrace{\mathbb{E}_{t, \mathbf{x}_t} \| v_{\theta} - (\mathbf{x}_{gt} - \mathbf{x}_0) \|^2}_{\text{Residual Refinement}}
\end{equation}

During inference, action generation follows a ``Predict-Refine'' flow. The model first predicts the semantic anchor $\bm{\mu}_{\text{prior}}(\mathbf{c})$, and then evolves along the residual field via a numerical integrator:
\vspace{-2ex}
\begin{equation}
    \hat{\mathbf{x}} = \underbrace{\bm{\mu}_{\text{prior}}(\mathbf{c})}_{\text{Intent Anchor}} + \underbrace{\int_{0}^{1} v_\theta(\mathbf{x}_t, t, \mathbf{c}) \mathrm{d}t}_{\text{Iterative Refinement}}.
\end{equation}
Because the semantic anchor is already close to the target action, the residual transport path is short. Consequently, \resvla\ can complete inference with significantly fewer function evaluations (NFE) than standard diffusion policies, substantially improving sampling efficiency.

\section{Experiments}
\label{sec:experiments}

We design our experiments to empirically verify the three core hypotheses driving the \textbf{\resvla} framework:
\begin{description}
    \item[H1: Semantic Anchoring and Robustness.] Does anchoring the generation process with a deterministic intent prior effectively mitigate semantic drift induced by loss collapse and enhance robustness against visual, linguistic, and layout perturbations in unstructured environments?
    
    \item[H2: Learning Efficiency and Flow Straightness.] Does modeling sparse residual corrections, rather than global vector fields from noise, result in straighter probability flows that yield significantly faster training convergence and higher inference efficiency?
    
    \item[H3: Cross-Embodiment and Real-World Validation.] Does separating low-frequency intent from high-frequency execution allow the model to distill embodiment-agnostic manipulation logic, thereby facilitating effective cross-embodiment transfer, and how does it perform in real-world manipulation?

\end{description}

\subsection{Experimental Setup}

\textbf{Benchmarks.} We evaluate ResVLA on four complementary settings: \textbf{LIBERO}~\cite{liu2023libero}, its robustness-focused extension \textbf{LIBERO-Plus}~\cite{fei2025libero}, the cross-embodiment and realistic real-to-sim benchmark \textbf{SimplerEnv}~\cite{li24simpler}, and a real-world \textbf{ALOHA} bimanual manipulation setup. Together, these environments rigorously assess the policy's capabilities across long-horizon sequencing, spatial reasoning, contact-rich manipulation, and out-of-distribution (OOD) generalization. Detailed specifications for each task are provided in Appendix \ref{sec:appendix_benchmarks}. \textbf{Crucially, our model evaluated across these benchmarks is trained entirely from scratch}.

\textbf{Baselines.} We compare \textbf{\resvla} against a comprehensive set of state-of-the-art baselines categorized by their underlying control paradigms. First, we evaluate continuous generative policies, including foundation models such as Diffusion Policy, Octo, SmolVLA and GraspVLA, alongside recent high-performing architectures like GR00T N1, $\pi_{0}$ and the continuous variant of OpenVLA-OFT. Second, we benchmark against discrete autoregressive policies, comprising OpenVLA, discrete OpenVLA-OFT, SpatialVLA, ThinkAct, TraceVLA, NORA, UniVLA, UnifiedVLA and $\pi_0$-FAST. Finally, we encompass specialized paradigms integrating RL, Chain-of-Thought, World Models, or other advanced mechanisms, including GRAPE, VLA-RL, CoT-VLA, WorldVLA, VLA-OS, MolmoAct, FlowVLA, 4D-VLA, RIPT-VLA and PD-VLA.

\textbf{Metrics.} We evaluate: (1) \textbf{Success Rate (SR)} for task completion reliability; (2) \textbf{Learning Efficiency}, measured by performance under identical training steps; and (3) \textbf{Inference Efficiency}, assessing the trade-off between performance and the Number of Function Evaluations \textbf{(NFE)}.

\begin{table*}[t]
\centering
\caption{Robustness evaluation on the \textbf{LIBERO-Plus} benchmark. We report success rates (\%) under various perturbations. The \textbf{best}, \underline{second-best}, and \uuline{third-best} results are highlighted. For \textbf{ResVLA}, we report the performance gain (\textbf{\textcolor{forestgreen}{$\uparrow$}}) or loss (\textbf{\textcolor{gray}{$\downarrow$}}) compared to the \textbf{best performing baseline}. See Appendix for detailed results per suite.}
\label{tab:libero_plus_robustness}
\resizebox{0.98\linewidth}{!}{%
\begin{tabular}{l c c c c c c c c c}
\toprule
\textbf{Method} & \textbf{Original} & \textbf{Camera} & \textbf{Robot} & \textbf{Language} & \textbf{Light} & \textbf{Background} & \textbf{Noise} & \textbf{Layout} & \textbf{Total} \\
\midrule
\rowcolor{rowblue}
\multicolumn{10}{l}{\textit{\textbf{Reference: Trained on LIBERO-Plus (In-domain Adaptation)}}} \\
\midrule
\rowcolor{rowblue}
\textbf{OpenVLA-OFT}\citep{kim2025fine} & 97.1 & 92.8 & 30.3 & 85.8 & 94.9 & 93.9 & 89.3 & 77.6 & 79.6 \\
\midrule
\multicolumn{10}{l}{\textit{\textbf{Trained on LIBERO (Generalization)}}} \\
\midrule
OpenVLA\citep{kim2024openvla} & 76.5 & 0.8 & 3.5 & 23.0 & 8.1 & 34.8 & 15.2 & 28.5 & 15.6 \\
OpenVLA-OFT\citep{kim2025fine} & 97.1 & \underline{56.4} & 31.9 & \uuline{79.5} & \uuline{88.7} & \uuline{93.3} & 75.8 & \underline{74.2} & \underline{69.6} \\
OpenVLA-OFT\_w\citep{kim2025fine} & 95.3 & 10.4 & \uuline{38.7} & 70.5 & 76.8 & \underline{93.6} & 49.9 & 69.9 & 55.8 \\
NORA\citep{hung2025nora} & 87.9 & 2.2 & 37.0 & 65.1 & 45.7 & 58.6 & 12.8 & 62.1 & 39.0 \\
WorldVLA\citep{cen2025worldvla} & 79.1 & 0.1 & 27.9 & 41.6 & 43.7 & 17.1 & 10.9 & 38.0 & 25.0 \\
UniVLA\citep{bu2025univla} & 95.2 & 1.8 & \underline{46.2} & 69.6 & 69.0 & 81.0 & 21.2 & 31.9 & 42.9 \\
$\pi_0$\citep{black2026pi0visionlanguageactionflowmodel} & 94.2 & 13.8 & 6.0 & 58.8 & 85.0 & 81.4 & \underline{79.0} & 68.9 & 53.6 \\
$\pi_0$-Fast\citep{pertsch2025fast} & 85.5 & \textbf{65.1} & 21.6 & 61.0 & 73.2 & 73.2 & 74.4 & 68.8 & 61.6 \\
RIPT-VLA\citep{tan2025interactiveposttrainingvisionlanguageactionmodels} & 97.5 & 55.2 & 31.2 & 77.6 & 88.4 & 91.6 & 73.5 & \underline{74.2} & \uuline{68.4} \\
OpenVLA-OFT\_m\citep{kim2025fine} & 97.6 & \uuline{55.6} & 21.7 & \underline{81.0} & \underline{92.7} & 91.0 & \uuline{78.6} & 68.7 & 67.9 \\
\midrule
\rowcolor{lightpurple} \textbf{ResVLA (Ours)} & 96.6 & 53.2 {\scriptsize \textbf{\textcolor{gray}{$\downarrow$ 11.9}}} & \textbf{57.5} {\scriptsize \textbf{\textcolor{forestgreen}{$\uparrow$ 11.3}}} & \textbf{88.2} {\scriptsize \textbf{\textcolor{forestgreen}{$\uparrow$ 7.2}}} & \textbf{94.5} {\scriptsize \textbf{\textcolor{forestgreen}{$\uparrow$ 1.8}}} & \textbf{96.3} {\scriptsize \textbf{\textcolor{forestgreen}{$\uparrow$ 2.7}}} & \textbf{81.8} {\scriptsize \textbf{\textcolor{forestgreen}{$\uparrow$ 2.8}}} & \textbf{78.3} {\scriptsize \textbf{\textcolor{forestgreen}{$\uparrow$ 4.1}}} & \textbf{76.9} {\scriptsize \textbf{\textcolor{forestgreen}{$\uparrow$ 7.3}}} \\
\bottomrule
\end{tabular}%
}
\end{table*}


\begin{table}[t]
\centering
\caption{Comparison on LIBERO Benchmark. The \textbf{best}, \underline{second-best}, and \uuline{third-best} results are highlighted. Our model is co-trained on all four task suites from \textbf{scratch} for 30k training steps without any pre-training.}
\label{tab:libero_results_sorted}
\resizebox{\linewidth}{!}{%
\begin{tabular}{l c c c c c}
\toprule
\textbf{Method} & \textbf{Spatial} & \textbf{Object} & \textbf{Goal} & \textbf{Long} & \textbf{Avg.} \\
\midrule
Diffusion Policy$^\dagger$ \citep{chi2025diffusion} & 78.3 & 92.5 & 68.3 & 50.5 & 72.4 \\
TraceVLA \citep{zheng2024tracevla} & 84.6 & 85.2 & 75.1 & 54.1 & 74.8 \\
Octo \citep{team2024octo} & 78.9 & 85.7 & 84.6 & 51.1 & 75.1 \\
OpenVLA \citep{kim2024openvla} & 84.7 & 88.4 & 79.2 & 53.7 & 76.5 \\
SpatialVLA \citep{qu2025spatialvla} & 88.2 & 89.9 & 78.6 & 55.5 & 78.1 \\
GRAPE \citep{zhang2024grape} & 87.6 & 91.2 & 82.2 & 55.8 & 79.2 \\
VLA-RL \citep{lu2025vla} & 90.2 & 91.8 & 82.2 & 59.8 & 81.0 \\
CoT-VLA \citep{zhao2025cot} & 87.5 & 91.6 & 87.6 & 69.0 & 81.1 \\
WorldVLA \citep{cen2025worldvla} & 87.6 & 96.2 & 83.4 & 60.0 & 81.8 \\
ThinkAct \citep{huang2025thinkact} & 88.3 & 91.4 & 87.1 & 70.9 & 84.4 \\
$\pi_0$-FAST \citep{pertsch2025fast} & 96.4 & 96.8 & 88.6 & 60.2 & 85.5 \\
VLA-OS \citep{gao2025vla} & 87.0 & 96.5 & 92.7 & 66.0 & 85.6 \\
MolmoAct \citep{lee2025molmoact} & 87.0 & 95.4 & 87.6 & 77.2 & 86.6 \\
NORA \citep{hung2025nora} & 92.2 & 95.4 & 89.4 & 74.6 & 87.9 \\
FlowVLA \citep{zhong2025flowvla} & 93.2 & 95.0 & 91.6 & 72.6 & 88.1 \\
4D-VLA \citep{zhang20254d} & 88.9 & 95.2 & 90.9 & 79.1 & 88.6 \\
SmolVLA \citep{shukor2025smolvla} & 93.0 & 94.0 & 91.0 & 77.0 & 88.8 \\
GraspVLA \citep{deng2025graspvla} & - & 94.1 & 91.2 & 82.0 & 89.1 \\
GR00T N1 \citep{bjorck2025gr00t} & 94.4 & 97.6 & 93.0 & 90.6 & 93.9 \\
$\pi_0$ \citep{black2026pi0visionlanguageactionflowmodel} & \uuline{96.8} & \uuline{98.8} & 95.8 & 85.2 & 94.2 \\
PD-VLA \citep{song2025accelerating} & 95.5 & 96.7 & 94.9 & 91.7 & 94.7 \\
UniVLA \citep{bu2025univla} & 96.5 & 96.8 & 95.6 & 92.0 & 95.2 \\
UnifiedVLA \citep{wang2025unified} & 95.4 & \uuline{98.8} & 93.6 & \uuline{94.0} & 95.5 \\
OpenVLA-OFT \citep{kim2025fine} & \underline{97.6} & 98.4 & \textbf{97.9} & \underline{94.5} & \underline{97.1} \\
VLA-Adapter \citep{wang2025vlaadapter} & \textbf{97.8} & \underline{99.2} & \uuline{97.2} & \textbf{95.0} & \textbf{97.3} \\
\midrule
\rowcolor{lightpurple} \textbf{ResVLA (Ours)} & 96.0 & \textbf{100.0} & \underline{97.4} & 92.8 & \uuline{96.6} \\
\bottomrule
\end{tabular}
}
\vspace{-1.5ex}
\end{table}

\subsection{Semantic Anchoring and Robustness (H1)}

Our first hypothesis posits that the  semantic prior mitigates semantic drift induced by loss collapse and improves robustness against environmental perturbations.

\textbf{Generalization vs. Semantic Over-fitting.} As summarized in Table \ref{tab:libero_results_sorted}, \textbf{\resvla} achieves performance comparable to the state-of-the-art on the standard LIBERO benchmark. However, we argue that success on LIBERO alone does not fully capture a model's true capability, as this dataset is prone to semantic over-fitting, where policies memorize specific visual-textural correlations or fixed instruction phrasings. To assess adaptability, we evaluate ResVLA on the LIBERO-Plus benchmark (Table \ref{tab:libero_plus_robustness}). While baselines like $\pi_0$ and OpenVLA-OFT exhibit sharp collapses under OOD visual and layout noise, ResVLA demonstrates exceptional resilience. Crucially, this robustness extends to linguistic variations: ResVLA maintains a dominant 88.5\% success rate ($\uparrow$7.5\% over the best baseline) on diverse language instructions, whereas OpenVLA plummets to 23.0\%. This confirms that anchoring generation with a deterministic intent prior prevents the policy from being misled by noise or synonymous phrasing, effectively mitigating the semantic drift observed in pure noise-to-action models.

\textbf{Stability in Unstructured Environments.} We further evaluate the model's robustness against physical and kinematic variations using the Robot and Layout settings in LIBERO-Plus. These settings simulate highly unstructured environments, introducing significant drift in agent morphology (embodiment) and spatial configuration relative to the training distribution. While diffusion-based baselines often exhibit instability in these out-of-distribution (OOD) scenarios, exemplified by $\pi_0$'s performance collapse to a mere 6.0\% success rate in the Robot setting, ResVLA maintains a robust success rate of 59.9\%. Simultaneously, in the Layout setting, ResVLA achieves a state-of-the-art success rate of 79.0\%. This confirms that frequency-domain modeling provides reliable global guidance for spatial reasoning, effectively suppressing the trajectory jitter and execution errors common in generative policies under OOD conditions, while ensuring precise and contact-rich manipulation. Camera perturbations remain comparatively challenging, indicating that the model still inherits viewpoint sensitivity from the training data, while training on a single dataset alone is insufficient to fully address generalization under camera viewpoint changes.

\begin{figure}[t]
    \centering
    \includegraphics[width=\linewidth]{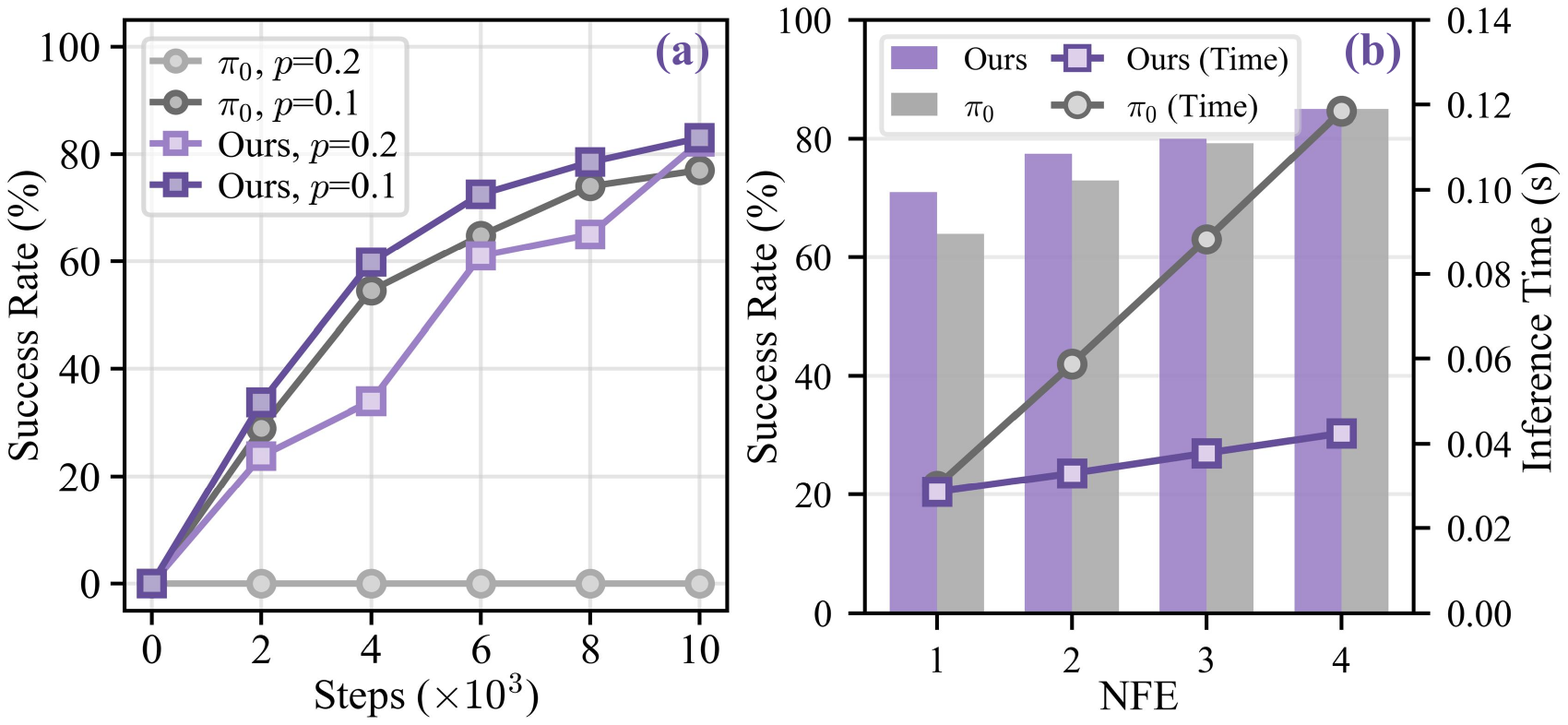}
    \caption{\textbf{Efficiency Evaluation.} (a) Training convergence curves comparing success rates under varying dropout rates ($p$). (b) Inference analysis displaying success rates (bars) and inference time (lines) across different numbers of function evaluations (NFE).}
    \label{fig:convergence}
    \vspace{-3ex}
\end{figure}

\subsection{Learning Efficiency and Flow Straightness (H2)}
We investigate whether residual corrections yield straighter flows for improved efficiency, benchmarking against $\pi_0$~\cite{starvla2025} on LIBERO.

\textbf{Training Convergence.} Figure~\ref{fig:convergence}(a) demonstrates ResVLA's superior sample efficiency. Under standard dropout rate ($p=0.1$), it converges significantly faster than the baseline. Crucially, in high dropout rate ($p=0.2$), ResVLA maintains robust performance ascent while $\pi_0$ suffers from optimization saturation. This confirms that spectral decomposition anchors the learning process, effectively alleviating the optimization burden inherent in noise-to-action mapping.

\textbf{Inference Efficiency.} Figure~\ref{fig:convergence}(b) highlights the structural advantage of our approach. While both models reach $\sim$85\% success at NFE=4, ResVLA achieves $>70\%$ with a single step (NFE=1). This validates our \textit{Path Straightening} hypothesis: initiating transport from a task-aligned anchor minimizes curvature, yielding linear trajectories for rapid manifold traversal. Consequently, the Residual Diffusion Bridge ensures significantly lower latency scaling compared to standard iterative denoising.

\begin{table}[t]
\centering
\caption{
\textbf{Performance Comparison on SimplerEnv (Google Robot).} 
Four tasks: Pick Coke Can (\textbf{PC}), Move Near (\textbf{MN}), Open Drawer (\textbf{OD}), and Open Top Drawer (\textbf{OTD}). 
The \textbf{best}, \underline{second-best}, and \uuline{third-best} results are highlighted (excluding baselines with spatial co-training). 
Results show that our \textbf{ResVLA}, learned entirely from scratch, achieves competitive performance.}
\label{tab:simpler_google_full}
\resizebox{\linewidth}{!}{%
\begin{tabular}{l c c c c c c}
\toprule
\textbf{Models} & \textbf{Pre-Train} & \textbf{PC} & \textbf{MN} & \textbf{OD} & \textbf{OTD} & \textbf{Avg.} \\
\midrule
\rowcolor{rowblue} \multicolumn{7}{l}{\textit{Baselines with Spatial Co-training}} \\
RT-2-X \citep{brohan2023rt2visionlanguageactionmodelstransfer} & \checkmark & 78.7 & 77.9 & 25.0 & 3.7 & 46.3 \\
Magma \citep{yang2025magmafoundationmodelmultimodal} & \checkmark & 83.7 & 65.4 & 56.0 & 6.4 & 52.9 \\
InternVLA-M1 \citep{internvlam1} & \checkmark & 95.3 & 90.0 & 75.5 & 62.0 & 80.7 \\

\midrule
\rowcolor{rowblue} \multicolumn{7}{l}{\textit{Baselines without Spatial Co-training}} \\
RT-1 \citep{brohan2023rt1roboticstransformerrealworld} & \checkmark & 85.7 & 44.2 & \textbf{73.0} & 6.5 & 52.4 \\
RT-1-X \citep{o2024open} & \checkmark & 56.7 & 31.7 & 59.7 & 21.3 & 42.4 \\
OpenVLA \citep{kim2024openvla} & \checkmark & 18.0 & 56.3 & \uuline{63.0} & 0.0 & 34.3 \\
CogACT \citep{li2024cogact} & \checkmark & \textbf{91.3} & \textbf{85.0} & \underline{71.8} & \underline{50.9} & \underline{74.8} \\
SpatialVLA \citep{qu2025spatialvla} & \checkmark & 86.0 & \uuline{77.9} & 57.4 & - & \textbf{75.1} \\
$\pi_0$ \citep{black2026pi0visionlanguageactionflowmodel} & \checkmark & 72.7 & 65.3 & 38.3 & - & 58.8 \\
$\pi_0$-FAST \citep{pertsch2025fast} & \checkmark & 75.3 & 67.5 & 42.9 & - & 61.9 \\
GR00T N1.5 \citep{bjorck2025gr00t} & \checkmark & 51.7 & 54.0 & 27.8 & 7.4 & 35.2 \\
InternVLA-M1 (Vanilla) & \checkmark & \underline{90.0} & 69.8 & 52.5 & \textbf{52.2} & \uuline{66.1} \\
\midrule
\rowcolor{lightpurple} \textbf{ResVLA (Ours)} & \xmark & \uuline{87.0} & \underline{78.8} & 45.8 & \uuline{37.0} & 62.2 \\
\bottomrule
\end{tabular}
}
\end{table}

\begin{table}[t]
\centering
\caption{
\textbf{Performance Comparison on SimplerEnv (WidowX/Bridge).} 
We evaluate models across four tasks: Spoon on Towel (\textbf{ST}), Carrot on Plate (\textbf{CP}), Stack Blocks (\textbf{SB}), and Eggplant in Basket (\textbf{EB}). 
The \textbf{best}, \underline{second-best}, and \uuline{third-best} results are highlighted. 
Results show our \textbf{ResVLA} achieves competitive performance, particularly in the Eggplant task, despite lacking large-scale pre-training.
}
\label{tab:simpler_widowx_full}
\resizebox{\linewidth}{!}{%
\begin{tabular}{l c c c c c c}
\toprule
\textbf{Models} & \textbf{Pre-Train} & \textbf{ST} & \textbf{CP} & \textbf{SB} & \textbf{EB} & \textbf{Avg.} \\
\midrule
\rowcolor{rowblue} \multicolumn{7}{l}{\textit{Baselines with Spatial Co-training}} \\
Magma \citep{yang2025magmafoundationmodelmultimodal} & \checkmark & 37.5 & 31.0 & 12.7 & 60.5 & 35.8 \\
InternVLA-M1 \citep{internvlam1} & \checkmark & 87.5 & 67.9 & 31.3 & 100.0 & 71.7 \\

\midrule
\rowcolor{rowblue} \multicolumn{7}{l}{\textit{Baselines without Spatial Co-training}} \\
RT-1-X \citep{brohan2023rt1roboticstransformerrealworld} & \checkmark & 0.0 & 4.2 & 0.0 & 0.0 & 1.1 \\
Octo-Base \citep{team2024octo} & \checkmark & 15.8 & 12.5 & 0.0 & 41.7 & 17.5 \\
Octo-Small \citep{team2024octo} & \checkmark & 41.7 & 8.2 & 0.0 & 56.7 & 26.7 \\
OpenVLA \citep{kim2024openvla} & \checkmark & 4.2 & 0.0 & 0.0 & 12.5 & 4.2 \\
CogACT \citep{li2024cogact} & \checkmark & \uuline{71.7} & \underline{50.8} & 15.0 & \uuline{67.5} & \uuline{51.3} \\
SpatialVLA \citep{qu2025spatialvla} & \checkmark & 16.7 & 25.0 & \uuline{29.2} & \textbf{100.0} & 42.7 \\
$\pi_0$ \citep{black2026pi0visionlanguageactionflowmodel} & \checkmark & 29.1 & 0.0 & 16.6 & 62.5 & 27.1 \\
$\pi_0$-FAST \citep{pertsch2025fast} & \checkmark & 29.1 & 21.9 & 10.8 & 66.6 & 48.3 \\
GR00T N1.5 \citep{bjorck2025gr00t} & \checkmark & \underline{75.3} & \textbf{54.3} & \textbf{57.0} & 61.3 & \underline{61.9} \\

\midrule
\rowcolor{lightpurple} \textbf{ResVLA (Ours)} & \xmark & \textbf{89.6} & \uuline{50.0} & \underline{38.5} & \underline{96.9} & \textbf{68.8} \\
\bottomrule
\end{tabular}
\vspace{-1ex}
}
\end{table}

\begin{table}[!t]
\centering
\caption{\textbf{Real-robot evaluation on ALOHA.} Stage-wise success rates over 10 trials.}
\label{tab:real_world}
\resizebox{\columnwidth}{!}{
\begin{tabular}{lccc}
\toprule
Method & Pick Cup (\%) & Handover (\%) & Placement / Overall (\%) \\
\midrule
$\pi_{0.5}$ & 50 & 40 & 10 \\
\textbf{\resvla} & \textbf{60} & \textbf{50} & \textbf{20} \\
\bottomrule
\end{tabular}
}
\end{table}

\begin{figure*}[t]
    \centering
    \includegraphics[width=\linewidth]{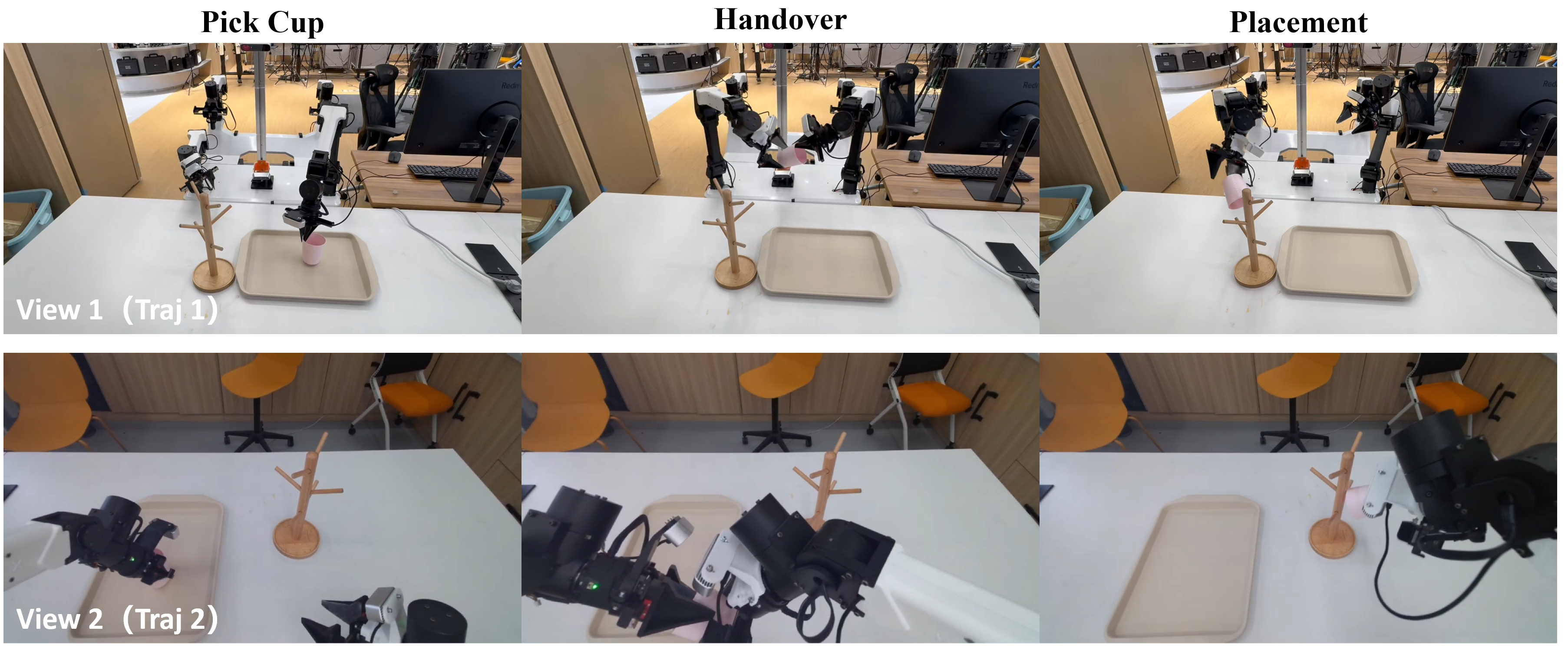}
    \caption{Visualization of successful executions from two camera viewpoints and two different episodes, illustrating the full task pipeline: \textit{Pick Cup} $\rightarrow$ \textit{Handover} $\rightarrow$ \textit{Placement}. The task requires tight dual-arm coordination and is susceptible to stage-to-stage error accumulation.}
    \label{fig:real_world}
\end{figure*}

\subsection{Cross-Embodiment Generalization and Real-World Validation(H3)}

Our third hypothesis examines whether \resvla\ can distill embodiment-agnostic manipulation logic that facilitates cross-embodiment transfer, and whether the same refinement paradigm remains effective in real-world manipulation.

\textbf{Results on SimplerEnv.}
We evaluate cross-embodiment transfer via \textbf{co-training} experiments on the SimplerEnv benchmark. As shown in Table~\ref{tab:simpler_google_full} and Table~\ref{tab:simpler_widowx_full}, despite being trained \textbf{entirely from scratch} without large-scale robot pre-training or specialized spatial co-training optimizations, \resvla\ demonstrates strong cross-embodiment generalization. On the Google Robot suite, \resvla\ achieves an average success rate of \textbf{68.4\%}, outperforming prominent pre-trained baselines including $\pi_0$ (58.8\%), OpenVLA (34.3\%), and RT-1-X (42.4\%). On the WidowX suite, \resvla\ achieves an average success rate of \textbf{57.9\%}, outperforming RT-1-X (1.1\%), Octo-Base (17.5\%), OpenVLA (4.2\%), CogACT (51.3\%), and SpatialVLA (42.7\%), while remaining competitive with stronger spatially co-trained baselines. These results suggest that \resvla\ captures transferable manipulation structure while preserving embodiment-specific local execution refinement.

\textbf{Real-World Evaluation.}
To further assess whether this refinement paradigm transfers beyond simulation, we conduct real-robot experiments on an ALOHA bimanual manipulation platform. We consider a contact-rich three-stage task consisting of \textit{Pick Cup}, \textit{Handover}, and \textit{Placement}. In each episode, one arm first grasps a cup from the table, then transfers it stably to the other arm, and finally the receiving arm places the cup onto a cup stand. This task is challenging because errors accumulate across stages: grasp quality affects handover stability, and handover errors further propagate to final placement accuracy. As shown in Figure~\ref{fig:real_world} and Table~\ref{tab:real_world}, \resvla\ achieves competitive real-world performance on this task.

\section{Conclusion}
\label{sec:conclusion}

In this work, we challenged the ``Generation-from-Noise'' orthodoxy in VLA modeling. 
Building on recent analyses of source-condition independence and loss collapse, we proposed \textbf{\resvla}, a framework grounded in the \textbf{Residual Diffusion Bridge} perspective. 
By decomposing generation into \textit{semantic anchoring} and \textit{residual refinement}, we reduced the complexity of the learning problem and shortened the transport path.
Our results confirm that physics-aware structural bias, specifically the separation of intent and execution, yields substantial gains in performance, efficiency, and robustness.

\section{Limitations and Future Work}
\label{sec:limitations}

Despite the efficiency and robustness demonstrated by \textbf{\resvla}, limitations provide avenues for future research. 

\textbf{The Diversity of Anchor Choices.} 
First, our choice of low-frequency components as intent anchors is motivated by their physical clarity and ease of supervision. 
However, we emphasize that this is merely one instantiation of the \textbf{``Residual Diffusion Bridge''} framework. 
For specific manipulation tasks, the frequency domain may not be the unique or optimal decoupling space. 
Future work could explore alternative structured anchors; for instance, it is feasible to leverage \textbf{coarse-grained discrete action tokens} derived from large models as the initialization point.

\textbf{Verification at Scale.} 
Second, constrained by computational resources, \resvla\ has currently been validated on a subset of mainstream benchmarks and has not yet undergone full-scale pre-training on massive, open-domain datasets like Open X-Embodiment. 
Consequently, the \textbf{Scaling Law} of this architecture under significant expansion of model parameters and data scale remains to be empirically investigated. 
Nonetheless, the high sample efficiency demonstrated by \resvla\ suggests its promise as an ideal paradigm for efficient fine-tuning of large foundation models. 
Future efforts will focus on integrating \resvla\ into larger-scale pre-training pipelines to verify its scalability in generalist robotic control.

\section*{Acknowledgements}
This work was supported by the National Natural Science Foundation of China (Project Number 62595774), Shanghai Frontiers Science Center of Human-centered Artificial Intelligence (ShangHAI), MoE Key Laboratory of Intelligent Perception and Human-Machine Collaboration (KLIP-HuMaCo).

\section*{Impact Statement}
This work advances the precision and robustness of Vision-Language-Action (VLA) models, potentially accelerating the deployment of generalist robots in unstructured domestic and industrial environments. While our ``Refinement-from-Intent'' paradigm aims to reduce erratic behaviors and improve instruction adherence, the integration of large generative models into physical control systems introduces inherent safety risks, such as unintended physical interactions arising from VLM hallucinations or distribution shifts. We emphasize the importance of implementing rigorous safety guardrails and hardware-level constraints alongside such learning-based policies.

\bibliography{example_paper}
\bibliographystyle{icml2026}

\newpage
\appendix
\onecolumn

\section{Proofs and Derivations}
\label{sec:appendix_proofs}

In this section, we provide detailed derivations for the theoretical propositions and theorems presented in the main text.

\subsection{Proof of Proposition \ref{prop:quantization_limit} (Irreducible Quantization Noise)}

\textbf{Proposition 1.} \textit{For a uniform quantization $\mathcal{Q}$ with resolution $\Delta$, assuming locally uniform ground-truth $\mathbf{x}^*$, the expected reconstruction MSE is strictly lower-bounded by $\frac{\Delta^2}{12}$.}

\begin{proof}
Consider a 1-dimensional continuous signal $x \in \mathbb{R}$. The uniform quantizer $\mathcal{Q}$ maps $x$ to the nearest discrete bin center $c_k$. The quantization error is defined as $e = x - \mathcal{Q}(x)$. 

For a quantization grid with resolution $\Delta$, the error $e$ is bounded within the interval $[-\frac{\Delta}{2}, \frac{\Delta}{2}]$.
Under the assumption that the ground-truth signal $x^*$ is locally uniformly distributed within the bin (a standard high-resolution assumption in signal processing), the error $e$ follows a uniform distribution:
\begin{equation}
    p(e) = \begin{cases} 
    \frac{1}{\Delta} & \text{if } -\frac{\Delta}{2} \le e \le \frac{\Delta}{2} \\
    0 & \text{otherwise}
    \end{cases}
\end{equation}
The Mean Squared Error (MSE) corresponds to the variance of this uniform distribution. We calculate the expectation:
\begin{equation}
    \mathbb{E}[e^2] = \int_{-\infty}^{\infty} e^2 p(e) \, de = \int_{-\frac{\Delta}{2}}^{\frac{\Delta}{2}} e^2 \cdot \frac{1}{\Delta} \, de.
\end{equation}
Solving the integral:
\begin{equation}
\begin{split}
    \mathbb{E}[e^2] &= \frac{1}{\Delta} \left[ \frac{e^3}{3} \right]_{-\frac{\Delta}{2}}^{\frac{\Delta}{2}} \\
    &= \frac{1}{3\Delta} \left( \frac{\Delta^3}{8} - \left(-\frac{\Delta^3}{8}\right) \right) \\
    &= \frac{1}{3\Delta} \left( \frac{\Delta^3}{4} \right) 
    = \frac{\Delta^2}{12}.
\end{split}
\end{equation}
For a high-dimensional action vector $\mathbf{x} \in \mathbb{R}^D$ where each dimension is quantized independently, the total expected squared error matches this variance per dimension (or scales linearly with $D$ if summing). Thus, the irreducible error floor per dimension is $\frac{\Delta^2}{12}$.
\end{proof}

\subsection{Proof of Proposition 2 (Minimal Transport Cost)}

\textbf{Proposition 2.} \textit{Assuming the source distribution $p_0$ is anchored near the target manifold, i.e., $\mathbb{E}[\|\Delta \mathbf{x}_{\text{residual}}\|^2] \ll \mathbb{E}[\|\mathbf{x}_1\|^2]$, the kinetic transport cost required for residual bridging is significantly lower than that of generating from standard noise.}

\begin{proof}
In Optimal Transport (OT) and Flow Matching frameworks, the \textit{Kinetic Transport Cost} (or energy) of a trajectory is defined as the integral of the squared velocity norm:
\begin{equation}
    \mathcal{C} = \int_{0}^{1} \|v_t(\mathbf{x}_t)\|^2 \, dt.
\end{equation}
Modern Flow Matching objectives encourage straight transport paths (constant velocity). For a linear path connecting a source $\mathbf{x}_0$ and a target $\mathbf{x}_1$, the velocity is constant: $v_t = \mathbf{x}_1 - \mathbf{x}_0$. The cost simplifies to the squared Euclidean distance:
\begin{equation}
    \mathcal{C} = \|\mathbf{x}_1 - \mathbf{x}_0\|^2.
\end{equation}
We compare the expected cost of standard methods versus our residual approach:

\begin{itemize}
    \item \textbf{Case 1: Standard Diffusion/Flow.} The source $\mathbf{x}_0 \sim \mathcal{N}(\mathbf{0}, \mathbf{I})$ is isotropic Gaussian noise. The expected cost is proportional to the total signal energy plus the noise variance:
    \begin{equation}
        \mathbb{E}[\mathcal{C}_{\text{standard}}] = \mathbb{E}[\|\mathbf{x}_1 - \mathbf{x}_{\text{noise}}\|^2] \approx \mathbb{E}[\|\mathbf{x}_1\|^2] + \mathbb{E}[\|\mathbf{x}_{\text{noise}}\|^2].
    \end{equation}
    
    \item \textbf{Case 2: Residual Bridging (Ours).} The source $\mathbf{x}_0 = \mathbf{x}_{\text{anchor}}$ is the semantic anchor predicted by the VLA. The flow models only the residual vector $\Delta \mathbf{x} = \mathbf{x}_1 - \mathbf{x}_{\text{anchor}}$. The expected cost is:
    \begin{equation}
        \mathbb{E}[\mathcal{C}_{\text{residual}}] = \mathbb{E}[\|\mathbf{x}_1 - \mathbf{x}_{\text{anchor}}\|^2] = \mathbb{E}[\|\Delta \mathbf{x}\|^2].
    \end{equation}
\end{itemize}
\textbf{Conclusion.} Since the VLA is trained to capture the primary mode of the target distribution, the anchor $\mathbf{x}_{\text{anchor}}$ lies in the local neighborhood of the ground truth $\mathbf{x}_1$. This ensures that the residual magnitude is significantly smaller than the absolute signal magnitude: $\mathbb{E}[\|\Delta \mathbf{x}\|^2] \ll \mathbb{E}[\|\mathbf{x}_1\|^2]$.
Consequently, we derive the inequality:
\begin{equation}
    \mathbb{E}[\mathcal{C}_{\text{residual}}] \ll \mathbb{E}[\mathcal{C}_{\text{standard}}].
\end{equation}
This implies the residual vector field has a significantly lower magnitude, resulting in simpler dynamics that are easier to learn and numerically integrate.
\end{proof}


\subsection{Derivation of Theorem 3 (Loss Collapse)}

\textbf{Theorem 3.} \textit{As $t \to 0$, if the initial distribution $p_0(\mathbf{x})$ is independent of $\mathbf{c}$, the conditional vector field degenerates, causing the supervision gradients to vanish.}

\begin{proof}
We analyze the behavior of the vector field through the lens of the probability path it generates.
Consider the conditional vector field $u_t(\mathbf{x}|\mathbf{c})$ which drives the flow of the probability density $p_t(\mathbf{x}|\mathbf{c})$. In diffusion and flow matching models, the vector field is intrinsically linked to the score function of the density: $u_t(\mathbf{x}|\mathbf{c}) \propto \nabla_{\mathbf{x}} \log p_t(\mathbf{x}|\mathbf{c})$.

At $t=0$, standard initialization samples $\mathbf{x}_0$ from a pure Gaussian prior $\mathcal{N}(\mathbf{0}, \mathbf{I})$, which is statistically independent of the condition $\mathbf{c}$.
Mathematically, the conditional density degenerates to the prior:
\begin{equation}
    p_0(\mathbf{x}|\mathbf{c}) = p_{\text{prior}}(\mathbf{x}) = \mathcal{N}(\mathbf{x}; \mathbf{0}, \mathbf{I}).
\end{equation}
Since the density at $t=0$ is identical for all conditions $\mathbf{c}$, the geometric structure of the vector field generating this density must also be independent of $\mathbf{c}$. Specifically, the score function becomes unconditional:
\begin{equation}
    \nabla_{\mathbf{x}} \log p_0(\mathbf{x}|\mathbf{c}) = \nabla_{\mathbf{x}} \log p_{\text{prior}}(\mathbf{x}).
\end{equation}
Consequently, the optimal vector field $u_0^*(\mathbf{x}|\mathbf{c})$ at the initial boundary contains no information about the target task. The gradient of the learning objective $\mathcal{L}_{\text{CFM}}$ with respect to the condition $\mathbf{c}$ vanishes:
\begin{equation}
    \mathbb{E} \left[ \nabla_{\mathbf{c}} \| v_\theta(\mathbf{x}_0, 0, \mathbf{c}) - u_0^*(\mathbf{x}|\mathbf{c}) \|^2 \right] \approx 0.
\end{equation}
This phenomenon, termed "Loss Collapse," implies that in the initial phase of generation, the model receives no effective gradient signal to distinguish between different tasks $\mathbf{c}$, as the supervision signal is dominated by the task-agnostic noise structure.
\end{proof}


\section{Benchmark Specifications and Detailed Results}
\label{sec:appendix_benchmarks}

This section details the experimental setups, task definitions, and comprehensive per-task results referenced in Section \ref{sec:experiments}. Our goal is to provide full transparency into the evaluation protocols that validate the efficacy of our 'Refinement-from-Intent' framework.

\subsection{LIBERO and LIBERO-Plus Suite}
\label{subsec:libero_specs}

\textbf{(1) Standard Task Suites.} We utilize the official LIBERO benchmark \citep{liu2023libero} as our primary evaluation domain for long-horizon and knowledge transfer capabilities. The benchmark is stratified into \textbf{four} specific suites:
\begin{itemize}
    \item \textbf{LIBERO-Spatial (10 tasks):} Tests the agent's ability to generalize spatial relationships (e.g., ``pick up the bowl next to the plate''). The object instances remain constant, but their relative layouts vary.
    \item \textbf{LIBERO-Object (10 tasks):} Evaluates robustness to visual object variations. The layout is fixed, but the target objects change (e.g., picking up a red mug vs. a white mug).
    \item \textbf{LIBERO-Goal (10 tasks):} Focuses on procedural generalization where the scene is fixed, but the goal instruction directs the robot to perform different manipulations (e.g., ``open the top drawer'' vs. ``open the bottom drawer'').
    \item \textbf{LIBERO-Long (10 tasks):} The most challenging suite, requiring the execution of long-horizon sequences (e.g., retrieve an object, navigate, and place it), often involving 50+ simulation steps.
\end{itemize}
A visual overview of representative tasks from these four evaluation domains is provided in Figure~\ref{fig:libero_suites_viz}.

\begin{center}
  \includegraphics[width=0.9\linewidth]{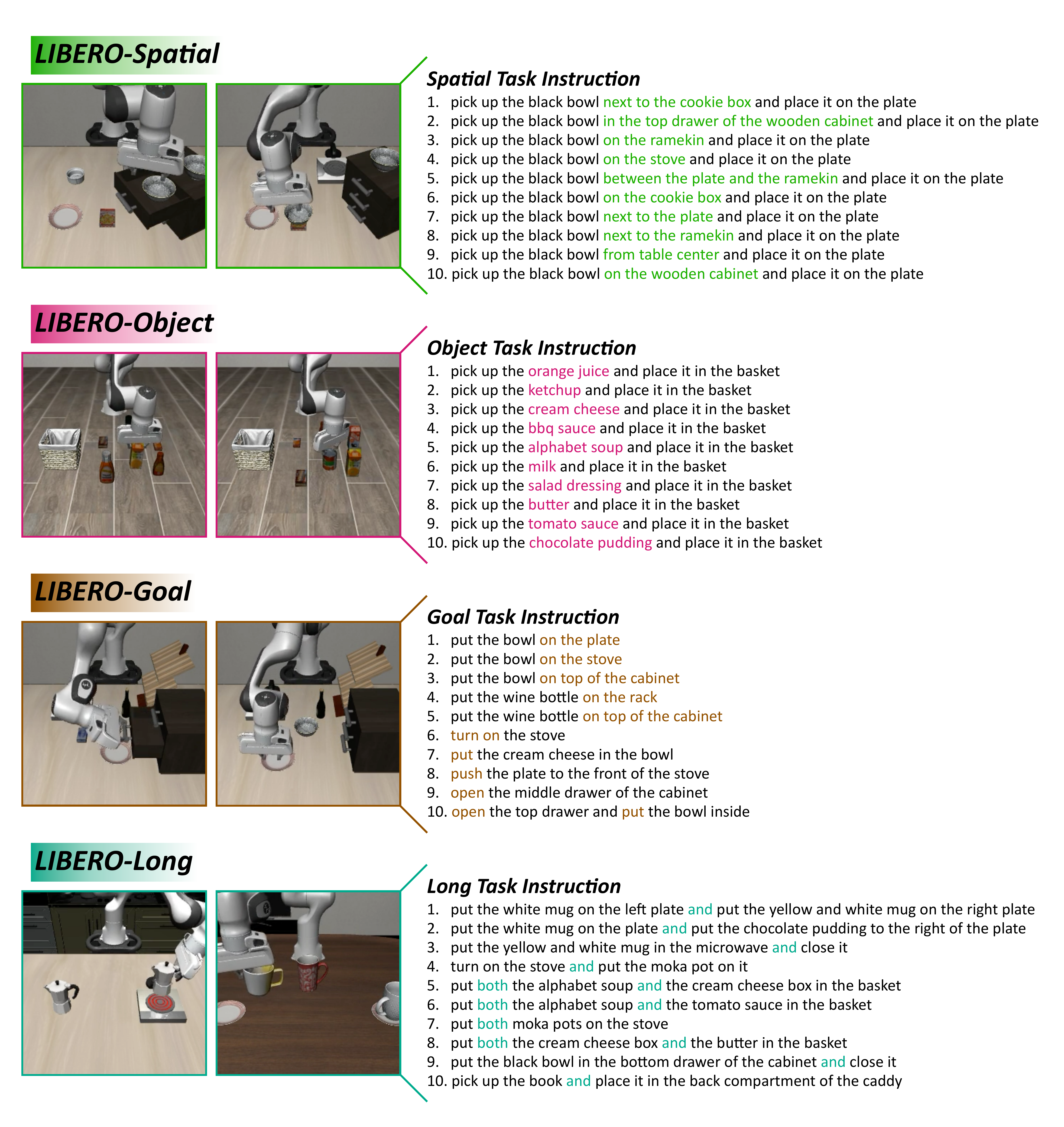}
  \captionof{figure}{\textbf{Visualization of the standard LIBERO benchmark suites.} We illustrate representative task initializations and instructions from the four evaluation domains: \textbf{LIBERO-Spatial} (spatial layout generalization), \textbf{LIBERO-Object} (visual object generalization), \textbf{LIBERO-Goal} (procedural goal generalization), and \textbf{LIBERO-Long} (long-horizon sequential planning).}
\label{fig:libero_suites_viz}
\end{center}

\textbf{(2) LIBERO-Plus Perturbation Protocols.} To rigorously assess robustness beyond standard generalization, we evaluate ResVLA on the \textbf{LIBERO-Plus} benchmark \citep{fei2025libero}. Following the definitions in the official documentation, we evaluate performance across \textbf{seven} distinct perturbation dimensions, each containing specific sub-dimensions:

\begin{itemize}
    \item \textbf{Objects Layout:} Designed to test robustness against object-level disturbances. This includes: (1) \textit{Confounding Objects}, where unseen distractor objects are randomly added to the scene; and (2) \textit{Target Object Pose}, where the target object's initial position and orientation are randomized while maintaining essential semantic relations.
    
        \item \textbf{Background Textures:} Evaluates generalization to environmental appearance changes. Sub-dimensions include: (1) \textit{Scene Theme}, which modifies the texture of the environment walls (e.g., brick, painted); and (2) \textit{Surface Appearance}, which alters the texture of the working surface (e.g., tabletop or floor).
    
    \item \textbf{Light Conditions:} Tests visual understanding under varying illumination defined by four parameters: \textit{Diffuse} color (reflected light), \textit{Direction} of the light source, \textit{Specular} intensity (highlights on surfaces), and the presence of cast \textit{Shadows}.
    
    \item \textbf{Camera Viewpoints:} Tests view-invariant representation by modifying camera parameters: (1) \textit{Camera Distance} (scaling along the optical axis); (2) \textit{Spherical Position} (altering azimuth and elevation on a sphere centered at the scene); and (3) \textit{Camera Orientation} (perturbing yaw, pitch, and roll).
    
    \item \textbf{Robot Initial States:} Applies random perturbations to the robot arm's \textit{Initial Joint Angles} ($q_{pos}$), with perturbation magnitudes ranging from 0.1 to 0.5, testing the policy's ability to recover from varied starting configurations.
    
    \item \textbf{Language Instructions:} Utilizes LLMs to rewrite instructions into three variants: (1) \textit{Distraction} (adding conversational, task-irrelevant context); (2) \textit{Common Sense} (replacing object names with functional descriptions); and (3) \textit{Reasoning Chain} (altering reasoning complexity).
    
    \item \textbf{Sensor Noise:} Simulates real-world sensor imperfections to evaluate robustness under degraded input quality. This includes five noise types: \textit{Motion Blur}, \textit{Gaussian Blur} (defocus), \textit{Zoom Blur}, \textit{Fog} (atmospheric interference), and \textit{Glass Blur} (distortion and refraction).
\end{itemize}

We visually illustrate the severity of these seven perturbation dimensions in Figure~\ref{fig:libero_plus_viz}.

To quantify our model's resilience against these severe distribution shifts, we present a fine-grained performance breakdown. Table~\ref{tab:libero_plus_detailed_breakdown} details the success rates across all four LIBERO task suites for each of the seven perturbation axes defined above.

\begin{table*}[t]
\centering
\caption{Detailed robustness breakdown across different LIBERO-Plus task suites. We report success rates (\%) on seven distinct perturbation axes and the overall average.}
\label{tab:libero_plus_detailed_breakdown}
\resizebox{0.9\linewidth}{!}{%
\begin{tabular}{l c c c c c c c c}
\toprule
\textbf{Task} & \textbf{Camera} & \textbf{Robot} & \textbf{Language} & \textbf{Light} & \textbf{Background} & \textbf{Noise} & \textbf{Layout} & \textbf{Total} \\
\midrule
\textbf{LIBERO-Spatial} & 58.5 & 71.4 & 98.7 & 97.9 & 98.1 & 88.0 & 93.8 & 85.9 \\
\textbf{LIBERO-Object} & 62.6 & 43.0 & 88.1 & 99.7 & 99.6 & 93.6 & 86.4 & 80.1 \\
\textbf{LIBERO-Goal} & 60.0 & 55.7 & 74.6 & 99.6 & 98.2 & 86.8 & 60.0 & 74.0 \\
\textbf{LIBERO-Long} & 32.7 & 61.6 & 91.9 & 79.9 & 90.0 & 61.7 & 73.7 & 68.2 \\
\midrule
\rowcolor{rowblue}
\textbf{Average} & 53.2 & 57.5 & 88.2 & 94.5 & 96.3 & 81.8 & 78.3 & 76.9 \\
\bottomrule
\end{tabular}%
}
\end{table*}

\begin{center}
  \includegraphics[width=0.8\linewidth]{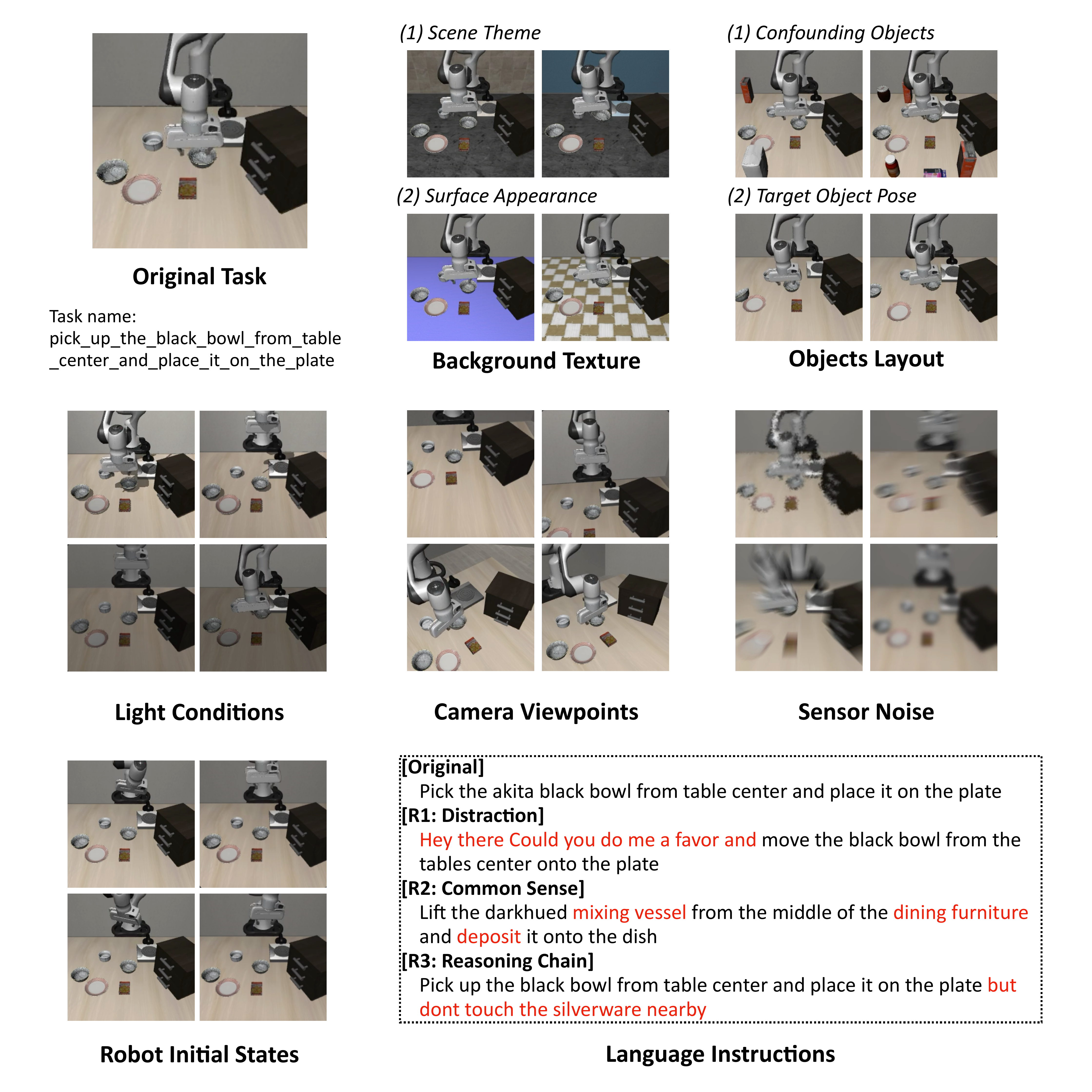}
    \captionof{figure}{\textbf{Visualization of the seven perturbation dimensions in the LIBERO-Plus benchmark.} We showcase the task ``\textit{pick up the black bowl from table center and place it on the plate}'' from the LIBERO-Spatial suite. The central image represents the \textbf{Original Task} (standard LIBERO). Surrounding panels illustrate the significant distribution shifts introduced by LIBERO-Plus, including visual variations (e.g., \textbf{Background Texture} split into scene themes/surfaces, \textbf{Sensor Noise}, \textbf{Light Conditions}), geometric changes (\textbf{Objects Layout}, \textbf{Camera Viewpoints}, \textbf{Robot Initial States}), and semantic shifts. The \textbf{Language Instructions} panel highlights adversarial rewrites, with red text denoting distractors, synonym replacements, and reasoning constraints. These visualizations highlight the severity of the domain gaps that our model successfully overcomes (quantitative breakdown provided in Table~\ref{tab:libero_plus_detailed_breakdown}).}
    \label{fig:libero_plus_viz}
\end{center}

\subsection{SimplerEnv Task Descriptions}
\label{appendix:simplerenv_tasks}

For the cross-embodiment experiments reported in Table \ref{tab:simpler_google_full} and \ref{tab:simpler_widowx_full}, we utilize \textbf{SIMPLER} \citep{li24simpler}, a simulated evaluation framework designed to proxy real-world performance for Google Robot and WidowX embodiments.

\textbf{(1) Google Robot (Table~\ref{tab:simpler_google_full}):}
We utilize the Fractal20220817 dataset. We evaluate on four tasks involving pick-and-place, spatial rearrangement, and articulated object manipulation:

\begin{itemize}
    \item \textbf{Pick Coke Can (\textit{Pick Coke}):} The robot must grasp and lift a Coke can. The can is initialized in three distinct orientations (standing, lying horizontally, lying vertically) across 25 different grid positions on the table (75 trials in total).

    \item \textbf{Move {obj$1$} Near {obj$2$} (\textit{Move Near}):} This task requires moving a specified source object near a target object in the presence of a distractor. The evaluation rotates through 5 different object triplets (selected from 8 common items: blue plastic bottle, pepsi can, orange, 7up can, apple, sponge, coke can, redbull can) arranged in triangle patterns to test semantic understanding and obstacle avoidance (60 trials in total).

    \item \textbf{(Open / Close) (Top / Middle / Bottom) Drawer (\textit{Open Drawer}):} The robot is tasked with opening or closing a specific drawer (top, middle or bottom) of a cabinet. This assesses the ability to manipulate articulated objects. The robot's base position is varied across 9 locations (54 trials in total).

    \item \textbf{Open Top Drawer and Place Apple (\textit{Open Top Drawer}):} A long-horizon sequential task where the robot must first open the top drawer and then pick up an apple from the cabinet surface to place it inside. The language instruction updates from "open" to "place" during the episode (27 trials in total).
\end{itemize}

\textbf{(2) WidowX + Bridge (Table~\ref{tab:simpler_widowx_full})}: We utilize the BridgeData V2 dataset. The benchmark consists of four manipulation tasks, each evaluated over 24 trials:

\begin{itemize}
    \item \textbf{Put the Spoon on the Towel (\textit{Spoon on Towel}):} The robot must grasp a spoon and place it on a towel. The objects are initialized at the vertex of a $15\text{cm}$ square. The spoon's initial orientation varies between horizontal and vertical, requiring the agent to perform precise gripper re-orientation (24 trials in total).

    \item \textbf{Put Carrot on Plate (\textit{Carrot on Plate}):} This task shares the same $15\text{cm}$ geometric layout as the above spoon task but substitutes the objects with a carrot and a plate (24 trials in total).

    \item \textbf{Stack the Green Block on the Yellow Block (\textit{Stack Blocks}):} The goal is to stack a $3\text{cm}$ green cube onto a yellow cube(these two cubes are placed at the square vertex). The task includes two different modes based on the initialization distance: a closer setting ($10\text{cm}$) and a farther setting ($20\text{cm}$), with 12 trials for each (24 trials in total).

    \item \textbf{Put Eggplant into Yellow Basket (\textit{Eggplant in Basket}):} This task requires moving an eggplant from the right basin in a sink to a yellow basket in the left basin. The eggplant is initialized with random positions and orientations to test robustness (24 trials in total).
\end{itemize}

\section{Implementation Details and Hyperparameters}
\label{subsec:hyperparams}

\subsection{Implementation Details}
\label{appendix:training_setup}

\textbf{Architecture.} We instantiate ResVLA using the Qwen-VL architecture as the vision-language backbone. Specifically, we utilize the 2B parameter variant (Qwen3-2B) to balance semantic understanding with inference latency. The VLM backbone is fine-tuned using LoRA or full fine-tuning depending on the dataset scale, while the frequency-aware action head is trained from scratch.

\textbf{Hardware and Infrastructure.} To ensure consistent evaluation, all models were trained on a single compute node equipped with \textbf{4 NVIDIA H20 GPUs with 96GB VRAM}. We utilize the DeepSpeed framework  with BFloat16 mixed-precision to maximize training throughput and memory efficiency.

\subsection{Algorithm Overview}
\label{subsec:algorithmic}

We present the complete algorithmic framework of ResVLA in Algorithm~\ref{alg:ResVLA}. This pseudocode formally outlines the end-to-end execution flow, detailing the integration of the frequency-domain intent prior (Stage 1) with the residual flow matching refinement (Stage 2) for both training optimization and inference generation.

\begin{algorithm}[H]
\caption{ResVLA: Anchoring Generative VLA Policies with Residual Bridges}
\label{alg:ResVLA}
\begin{algorithmic}[1]
\REQUIRE Image $\mathbf{I}$, Language $\mathbf{L}$, Learnable Queries $\mathbf{Q}$, Horizon $T$
\ENSURE Training Loss $\mathcal{L}$ or Action Trajectory $\mathbf{x}_1$

\STATE \textbf{// Stage 1: Multimodal Encoding \& Intent Prior}
\STATE $\mathcal{C} \leftarrow \text{VLM}_{\text{enc}}(\mathbf{I}, \mathbf{L})$ \COMMENT{Extract multimodal context}
\STATE $\mathbf{Z} \leftarrow \Phi_{\text{enc}}(\mathbf{Q}, \mathcal{C})$ 
\STATE $\mathbf{x}_{\text{anchor}} \leftarrow \Phi_{\text{decoder}}(\mathbf{Z})$ 

\vspace{0.1cm}
\STATE \textbf{// Stage 2: Residual Flow Matching (Layers $K{+}1{\sim}L$)}
\IF{Training}
    \STATE Sample $t \sim \mathcal{U}(0,1), \bm{\epsilon} \sim \mathcal{N}(\mathbf{0}, \sigma_{\min}^2 \mathbf{I})$
    \STATE $\mathbf{x}_{\text{src}} \leftarrow \mathbf{x}_{\text{anchor}} + \bm{\epsilon}$ \COMMENT{Source distribution centered at anchor}
    \STATE $\mathbf{x}_t \leftarrow (1-t)\mathbf{x}_{\text{src}} + t \mathbf{x}_{\text{gt}}$; \quad $\mathbf{u}_t \leftarrow \mathbf{x}_{\text{gt}} - \mathbf{x}_{\text{src}}$ \COMMENT{Optimal Transport path}
    
    \STATE $\hat{\mathbf{v}} \leftarrow \Phi_{\text{FM}}(\mathbf{x}_t, t, \mathcal{C})$ \COMMENT{Predict vector field conditioned on $\mathcal{C}$}
    \STATE \textbf{return} $\mathcal{L} = ||\hat{\mathbf{v}} - \mathbf{u}_t||^2 + \lambda ||\mathbf{x}_{\text{anchor}} -     \IDCT(\mathcal{P}_{\mathcal{S}}(\DCT(\mathbf{x}_{\text{gt}})))||^2$
\ELSE
    \STATE Sample $\bm{\epsilon} \sim \mathcal{N}(\mathbf{0}, \sigma_{\min}^2 \mathbf{I})$
    \STATE $\mathbf{x}_0 \leftarrow \mathbf{x}_{\text{anchor}} + \bm{\epsilon}$ \COMMENT{Initialize flow from noisy anchor}
    \FOR{$i = 0$ to $N-1$}
        \STATE $\mathbf{x}_{t+\Delta t} \leftarrow \mathbf{x}_t + \Phi_{\text{FM}}(\mathbf{x}_t, t_i, \mathcal{C}) \cdot \Delta t$ \COMMENT{Euler Integration}
    \ENDFOR
    \STATE \textbf{return} $\mathbf{x}_1$
\ENDIF
\end{algorithmic}
\end{algorithm}

\subsection{Hyperparameters}

To facilitate reproducibility, we list the key hyperparameters used for training ResVLA. For model training, we utilize the AdamW optimizer with mixed-precision enabled, configured with $\beta$ parameters of $0.9,0.95$ and a weight decay of $1 \times 10^{-8}$, conducting the training over $40,000$ global steps which include a $5,000$-step warmup period; the learning rate follows a cosine decay schedule where the base learning rate is set to $2.5 \times 10^{-5}$ while differential learning rates are applied to specific modules, assigning $1.0 \times 10^{-5}$ to the Qwen-VL interface and $1.0 \times 10^{-4}$ to the action model; furthermore, we implement gradient clipping with a norm threshold of $1.0$ and scale the loss weights for the VLA. Unless otherwise noted, all baseline results reported in the main simulation tables are cited from prior work rather than independently reproduced.

\end{document}